\documentclass{article}

\usepackage{microtype}
\usepackage{graphicx}
\usepackage{subcaption}
\usepackage{booktabs} % for professional tables
\usepackage{makecell}
\usepackage{hyperref}

\usepackage[preprint]{icml2026}

\usepackage{amsmath}
\usepackage{amssymb}
\usepackage{mathtools}
\usepackage{amsthm}

\usepackage[capitalize,noabbrev]{cleveref}

\theoremstyle{plain}
\newtheorem{theorem}{Theorem}[section]

\theoremstyle{definition}

\theoremstyle{remark}

\usepackage[textsize=tiny]{todonotes}

\usepackage{enumitem}
\setlist[itemize]{itemsep=1pt, topsep=0pt, parsep=0pt, partopsep=0pt}
\setlist[enumerate]{leftmargin=*, itemsep=1pt, topsep=0pt, parsep=0pt, partopsep=0pt}
\usepackage{multirow}

\renewcommand{\paragraph}[1]{\vspace{1.25mm}\noindent\textbf{#1}}

\definecolor{stfred}{RGB}{140, 21, 21}
\usepackage{tcolorbox}

\usepackage{dsfont}

\usepackage{colortbl}
\usepackage[table]{xcolor}

\usepackage{siunitx}

\newcommand{\R}{\mathbb{R}}
\renewcommand{\P}{\mathbb{P}}

\newcommand{\x}{\mathbf{x}}
\newcommand{\h}{\mathbf{h}}

\newcommand{\y}{\mathbf{y}}
\newcommand{\z}{\mathbf{z}}

\newcommand{\q}{\mathbf{q}}
\renewcommand{\k}{\mathbf{k}}

\renewcommand{\H}{\mathbf{H}}
\newcommand{\Q}{\mathbf{Q}}
\newcommand{\K}{\mathbf{K}}
\newcommand{\V}{\mathbf{V}}
\renewcommand{\O}{\mathbf{O}}
\newcommand{\F}{\mathcal{F}}

\newcommand{\A}{\mathbf{A}}
\newcommand{\Y}{\mathbf{Y}}

\newcommand{\U}{\mathbf{U}}

\renewcommand{\S}{\mathbf{S}}

\newcommand{\softmax}{\operatorname{softmax}}
\newcommand{\Concat}{\operatorname{Concat}}
\newcommand{\SiLU}{\operatorname{SiLU}}
\newcommand{\RMSNorm}{\operatorname{RMSNorm}}

\newcommand{\T}{\mathrm{T}}

\renewcommand{\d}[1]{\mathrm{d}_{\mathrm{#1}}}
\newcommand{\N}[1]{\mathrm{N}_{\mathrm{#1}}}
\newcommand{\W}[1]{\mathbf{W}_{\mathrm{#1}}}
\newcommand{\M}[1]{\mathbf{M}_{\mathrm{#1}}}

\icmltitlerunning{Anatomy of Massive Activations and Attention Sinks}

\begin{document}

\twocolumn[
  \icmltitle{The Spike, the Sparse and the Sink: \\
    Anatomy of Massive Activations and Attention Sinks}

  % It is OKAY to include author information, even for blind submissions: the
  % style file will automatically remove it for you unless you've provided
  % the [accepted] option to the icml2026 package.

  % List of affiliations: The first argument should be a (short) identifier you
  % will use later to specify author affiliations Academic affiliations
  % should list Department, University, City, Region, Country Industry
  % affiliations should list Company, City, Region, Country

  % You can specify symbols, otherwise they are numbered in order. Ideally, you
  % should not use this facility. Affiliations will be numbered in order of
  % appearance and this is the preferred way.
  \icmlsetsymbol{equal}{*}

  \begin{icmlauthorlist}
    \icmlauthor{Shangwen Sun}{nyu}
    \icmlauthor{Alfredo Canziani}{nyu}
    \icmlauthor{Yann LeCun}{nyu}
    \icmlauthor{Jiachen Zhu}{nyu}
  \end{icmlauthorlist}

  \icmlaffiliation{nyu}{New York University}

  \icmlcorrespondingauthor{Shangwen Sun}{shangwen.sun@nyu.edu}
  \icmlcorrespondingauthor{Jiachen Zhu}{jiachen.zhu@nyu.edu}

  % You may provide any keywords that you find helpful for describing your
  % paper; these are used to populate the "keywords" metadata in the PDF but
  % will not be shown in the document
  \icmlkeywords{Machine Learning, ICML}

  \vskip 0.3in
]

% this must go after the closing bracket ] following \twocolumn[ ...

% This command actually creates the footnote in the first column listing the
% affiliations and the copyright notice. The command takes one argument, which
% is text to display at the start of the footnote. The \icmlEqualContribution
% command is standard text for equal contribution. Remove it (just {}) if you
% do not need this facility.

% Use ONE of the following lines. DO NOT remove the command.
% If you have no special notice, KEEP empty braces:
\printAffiliationsAndNotice{}  % no special notice (required even if empty)
% Or, if applicable, use the standard equal contribution text:
% \printAffiliationsAndNotice{\icmlEqualContribution}

\begin{abstract}
We study two recurring phenomena in Transformer language models: \emph{massive activations}, in which a small number of tokens exhibit extreme outliers in a few channels, and \emph{attention sinks}, in which certain tokens attract disproportionate attention mass regardless of semantic relevance.
Prior work observes that these phenomena frequently co-occur and often involve the same tokens, but their functional roles and causal relationship remain unclear.
Through systematic experiments, we show that the co-occurrence is largely an \emph{architectural artifact} of modern Transformer design, and that the two phenomena serve related but distinct functions.
Massive activations operate \emph{globally}: they induce near-constant hidden representations that persist across layers, effectively functioning as implicit parameters of the model.
Attention sinks operate \emph{locally}: they modulate attention outputs across heads and bias individual heads toward short-range dependencies.
We identify the pre-norm configuration as the key choice that enables the co-occurrence, and show that ablating it causes the two phenomena to decouple.
\end{abstract}

\section{Introduction}

Transformer-based~\cite{vaswani2017attention} Large language models (LLMs) have achieved unprecedented success across a wide range of tasks~\cite{radford2018improving,radford2019language,brown2020language,openai2022chatgpt,achiam2023gpt,touvron2023llama,touvron2023llama2,grattafiori2024llama,qwen25technicalreport,yang2025qwen3}, yet many aspects of their internal computations remain poorly understood.
In this paper, we study two phenomena that reliably co-occur in decoder-only, pre-norm Transformers~\cite{radford2018improving,xiong2020layer}: \emph{massive activations}, in which a handful of tokens exhibit extreme outliers in a few hidden channels~\cite{sun2024massive,yu2024super}, and \emph{attention sinks}, in which a small number of tokens attract disproportionate attention mass across many heads and layers~\cite{xiao2023efficient}.
Both phenomena have significant practical implications for quantization~\cite{xiao2023smoothquant,yu2024super,son2024prefixing}, pruning~\cite{ma2023llm,sandoval2025using,shin2025orthorank}, KV-cache management~\cite{ge2023model,su2025kvsink,wu2024layer}, and long-context inference~\cite{huang2023advancing,fu2025h,xiao2024duoattention}, among others.
Understanding how these two phenomena relate is therefore both theoretically and practically important.

Prior work~\cite{sun2024massive,kaul2024attention,queipo2025attention} has suggested that the co-occurrence is driven by the overlap of involved tokens, but existing explanations remain largely descriptive.
Here, we move beyond description to provide a \emph{mechanistic account} of \emph{how} and \emph{why} this overlap emerges in pretrained LLMs.
Our core finding is that the co-occurrence is not an inherent property of Transformers, but a predictable consequence of specific architectural and training choices.

We advance three central claims:
\textbf{First}, normalization is a key architectural component bridging the relationship between massive activations and attention sinks.
Changing the normalization configuration can suppress massive activations while preserving attention sinks.
Mechanistically, massive activations interact with normalization to produce near-constant hidden representations within a forward pass, effectively serving as implicit parameters that can be exploited to generate attention sinks.
\textbf{Second}, attention sinks are primarily driven by the dimensionality of the attention space and by the training context-length distribution.
We further show that sinks provide a mechanism to dynamically modulate attention output across heads, biasing certain heads toward \emph{short-range dependencies} that capture local sentence structure.
\textbf{Third}, each phenomenon can be independently suppressed without degrading language-modeling performance, suggesting that their overlap reflects incidental architectural interactions rather than a functional necessity.

Together, our results clarify the casual relationship between massive activations and attention sinks and show how alternative design choices can mitigate either of the phenomena.
For readability, we refer to the tokens and channels that exhibit massive activations as \emph{spike tokens} and \emph{spike channels}, and to the tokens and attention heads affected by attention sinks as \emph{sink tokens} and \emph{sink heads}.

\section{Preliminaries}

Modern LLMs are typically trained on the next-token prediction task \cite{bengio2003neural, radford2018improving} over large text corpora, using decoder-only Transformers \cite{vaswani2017attention} with the pre-norm configuration \cite{xiong2020layer}.
Despite variations in model size and training data, these choices have remained remarkably consistent across most major models.
This section formalizes these core components and establishes the notation used throughout subsequent discussion.

\subsection{Next-Token Prediction}

Next-token prediction is a self-supervised learning objective that leverages the sequential structure of natural language.
By treating token order as a natural supervisory signal, models can be trained on vast unlabeled corpora.
Formally, let $\x \coloneqq (x_1, \ldots, x_{\T})$ be a sequence of $\T$ tokens, where each token $x_i$ takes values in a finite vocabulary $\mathcal{V}$.
A language model parameterized by $\theta$ defines a joint distribution:
\begin{align}
\P_\theta(\x) = \P_\theta(x_1, \ldots, x_{\T}).
\end{align}
Direct modeling of this joint distribution is computationally intractable due to the exponential growth of the sample space.
Autoregressive models address this by factorizing the joint distribution into a product of conditional probabilities:
\begin{align}
\P_\theta(x_1, \ldots, x_{\T}) = \prod_{i} \P_\theta(x_i \mid \x_{<i}),
\end{align}
where $\x_{<i}\coloneqq(x_1, \dots, x_{i-1})$ represents the prefix (context) preceding index $i$.

In decoder-only Transformers, each conditional is computed by mapping the prefix $\x_{<i}$ to a distribution over $\mathcal{V}$.
During training, all conditionals are produced in parallel by supplying the ground-truth prefix at every position via teacher forcing \cite{williams1989learning}.
Given a training corpus $\mathcal{D}$, parameters $\theta$ are learned by minimizing the expected negative log-likelihood:
\begin{align}
\mathcal{L}(\theta) \coloneqq - \mathbb{E}_{\x \sim \mathcal{D}} \left [ \sum_{i} \log \P_\theta(x_i \mid \x_{<i}) \right ].
\end{align}
This objective reduces language modeling to a sequence of conditional classification problems over $\mathcal{V}$, with the conditioning context growing with $i$.

\subsection{Transformer Architecture}

Since the introduction of the Transformer model, many architectural variants have been proposed, and modern LLMs differ in numerous details.
In this section, we describe the specific architecture used by the Llama family of LLMs~\citep{touvron2023llama,touvron2023llama2,grattafiori2024llama}. We focus on Llama because it is among the most widely used open-weight models, and its design choices have strongly influenced subsequent open models such as Qwen~\citep{qwen25technicalreport,yang2025qwen3} and Mistral~\citep{liu2026ministral}.

\paragraph{Token embedding.}
A natural language sentence is first decomposed into a sequence of discrete tokens by a tokenizer, then mapped to continuous vectors via an embedding table.
Specifically, each token is mapped to a $\d{model}$-dimensional vector. For a sequence of $\T$ tokens, we denote the resulting hidden representation by $\H_1 \in \R^{\T \times \d{model}}$.

\paragraph{Transformer layers.}
Starting from $\H_1$, a stack of $\mathrm{L}$ Transformer layers transforms the hidden representation while preserving its dimensionality. Each layer consists of two blocks—an attention block and a feed-forward block—yielding $2\mathrm{L}$ blocks in total.

Let $\H_i \in \R^{\T \times \d{model}}$ denote the input to block $i$, and let $\F_i(\cdot)$ denote its transformation.
Every block employs a residual connection with pre-norm configuration:
\begin{align}
\label{equation:residual_update}
\H_{i+1} = \H_i + \F_i(\RMSNorm(\H_i)),
\end{align}
where $\F_i$ is the attention block when $i$ is odd and the feed-forward block when $i$ is even.
The function $\RMSNorm(\cdot)$ \cite{zhang2019root} is applied row-wise:
\begin{align}
\RMSNorm(\h) \coloneqq \sqrt{\d{model}} \frac{\h}{\|\h\|},
\end{align}
where $\h \in \R^{\d{model}}$ is a single row of $\H_i$.
We omit the learnable scale parameter from the $\RMSNorm$ formulation here, since every $\RMSNorm$ is immediately followed by a linear layer and the scale parameter can be absorbed into the subsequent weight matrix during the forward pass.

\paragraph{Attention block.}
The attention mechanism is implemented as multi-head attention with $\N{head}$ heads, each of dimension $\d{head}$.
For each head $i$, the normalized input $\tilde{\H} \coloneqq \RMSNorm(\H)$ is projected using head-specific weight matrices $\W{Q}^{(i)}, \W{K}^{(i)}, \W{V}^{(i)} \in \R^{\d{model} \times \d{head}}$:
\begin{align}
    \Q^{(i)} &\coloneqq \tilde{\H} \W{Q}^{(i)}, \\
    \K^{(i)} &\coloneqq \tilde{\H} \W{K}^{(i)}, \\
    \V^{(i)} &\coloneqq \tilde{\H} \W{V}^{(i)}, \\
    \A^{(i)} &\coloneqq \softmax\!\left(\frac{\Q^{(i)} {\K^{(i)\top}}}{\sqrt{\d{head}}} + \M{causal}\right), \\
    \O^{(i)} &\coloneqq \A^{(i)} \V^{(i)},
    \label{equation:head_output}
\end{align}
where $\softmax$ is applied row-wise to ensure each row of $\A^{(i)}$ forms a valid probability distribution.
The causal mask $\M{causal} \in \R^{\T \times \T}$ enforces the autoregressive property: its entries are $0$ on and below the diagonal and $-\infty$ above, preventing each position from attending to future tokens.
For simplicity, we omit positional encoding from our description.
In practice, Llama applies Rotary Position Embeddings \cite{su2024roformer} to the $\Q^{(i)}$ and $\K^{(i)}$ before computing $\A^{(i)}$.

The per-head outputs are concatenated and projected via $\W{O} \in \R^{(\N{head} \cdot \d{head}) \times \d{model}}$ to produce the final output:
\begin{align}
\label{equation:attn_output}
\F_{\mathrm{attn}}(\tilde{\H}) \coloneqq \Concat\!\left(\O^{(1)}, \dots, \O^{(\N{head})}\right) \W{O}.
\end{align}

\paragraph{Feed-forward block.}
While the attention block facilitates information exchange across token positions, the feed-forward block operates independently on each position.
Modern LLMs typically employ the SwiGLU activation function~\cite{shazeer2020glu}.
For an input vector $\tilde{\h} \in \R^{\d{model}}$ (a row of $\tilde{\H}$), the feed-forward transformation is defined as:
\begin{align}
\label{equation:SwiGLU}
\F_{\mathrm{ffn}}(\tilde \h) \coloneqq \W{down} \cdot \left(\SiLU(\W{gate} \tilde \h) \odot (\W{up} \tilde \h) \right ),
\end{align}
where $\odot$ denotes the element-wise (Hadamard) product.
The weight matrices are the gate-projection $\W{gate} \in \R^{\d{ffn} \times \d{model}}$, the up-projection $\W{up} \in \R^{\d{ffn} \times \d{model}}$, and the down-projection $\W{down} \in \R^{\d{model} \times \d{ffn}}$.
Here $\d{ffn}$ denotes the intermediate dimension, which is typically three or four times large than $\d{model}$.

\paragraph{Prediction head.}
After all $2L$ blocks, the final hidden representation passes through a $\RMSNorm$ layer and a linear projection to produce logits for next-token prediction:
\begin{align}
    \Y \coloneqq \RMSNorm(\H_{2L+1})\, \W{head},
\end{align}
where $\H_{2L+1}$ is the output of the last residual block, $\W{head} \in \R^{\d{model} \times |\mathcal{V}|}$ is the projection head, and $\Y \in \R^{\T \times |\mathcal{V}|}$ is the matrix of output logits.

\section{From Spikes to Sinks}
\label{section:from_spikes_to_sinks}

This section examines the co-emergence of massive activations and attention sinks in pretrained LLMs.
We begin by tracing the formation of massive activations, identifying the architectural components responsible for their generation and propagation.
We then show how normalization transforms these tokens with massive activations into sparse, near-constant input vectors, enabling the formation of attention sinks.
Our analysis draws primarily from the Llama~\cite{touvron2023llama,touvron2023llama2,grattafiori2024llama} and Qwen~\cite{qwen25technicalreport,yang2025qwen3} model families.
We present the main findings and key empirical results here, deferring additional supporting evidence to~\cref{appendix:additional_empirical_results}.

\subsection{The Emergence of Massive Activations}

Prior work~\cite{bondarenko2023quantizable,xiao2023smoothquant,nrusimha2024mitigating,sun2024massive,yu2024super} has characterized massive activations as extreme outliers in the post-residual hidden representations of Transformers.
These outliers, which often exceed typical activation scales by several orders of magnitude, exhibit five recurring properties across models and prompts:
\begin{enumerate}[label=(\roman*)]
\item They appear only in \emph{intermediate layers}.\label{property:1}
\item They appear only in a \emph{small number of channels}.\label{property:2}
\item The affected channels \emph{consistently spike together}.\label{property:3}
\item The spikes maintain \emph{almost fixed inter-channel ratios}.\label{property:4}
\item They appear only in a \emph{small number of tokens}.\label{property:5}
\end{enumerate}
In this subsection, we trace the emergence of massive activations systematically, showing how each of these properties arises.
As we will see in the next subsection, these properties play a foundational role in enabling attention sinks.

\subsubsection{The Life Cycle of Massive Activations}

\begin{figure*}[t]
  \begin{center}
    \begin{subfigure}[b]{.49\textwidth}
        \centerline{\includegraphics[height=0.65\linewidth]{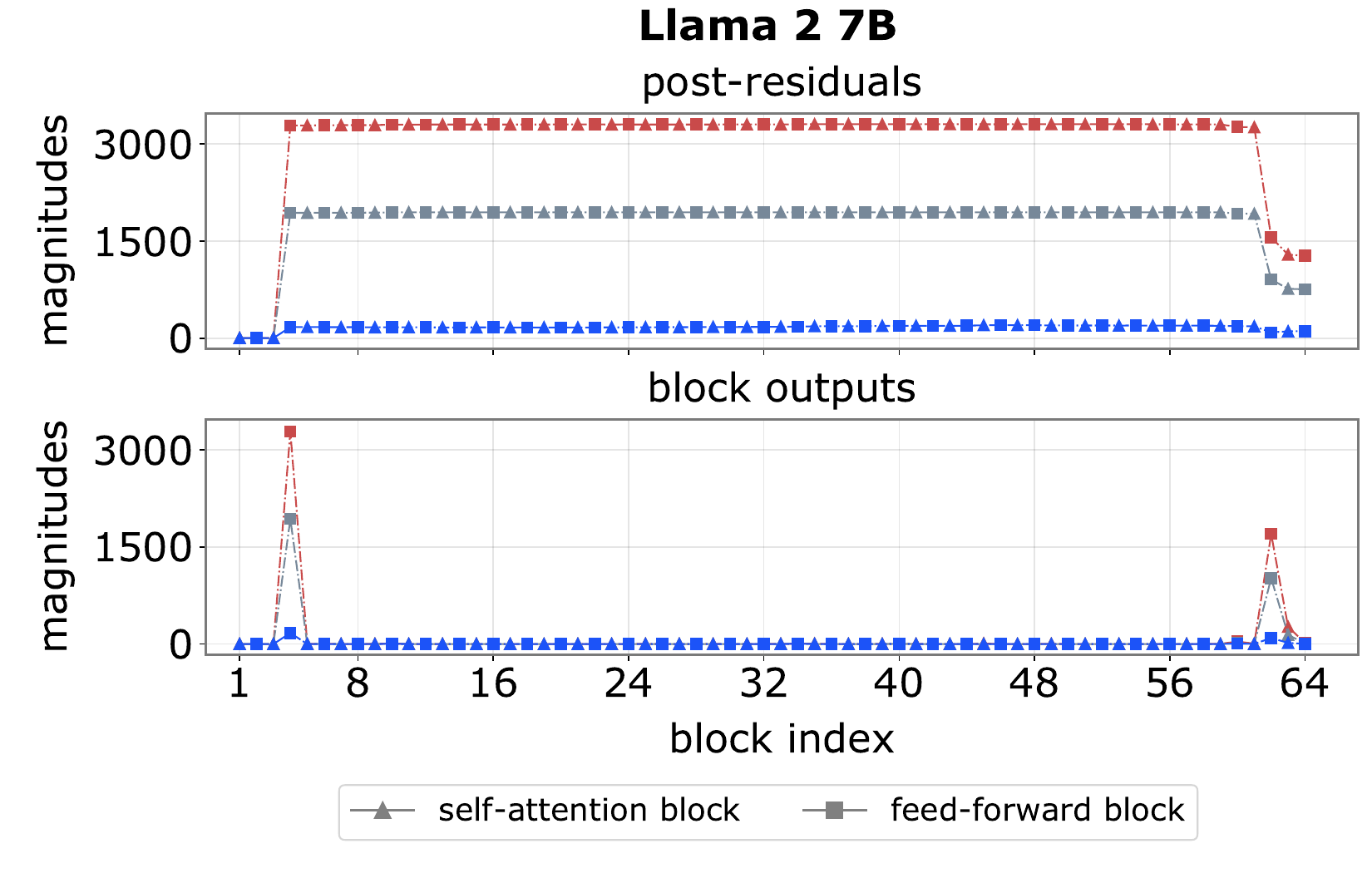}}
    \end{subfigure}
    \hfill
    \begin{subfigure}[b]{.49\textwidth}
        \centerline{\includegraphics[height=0.65\linewidth,trim={1.6cm 0 0 0},clip]{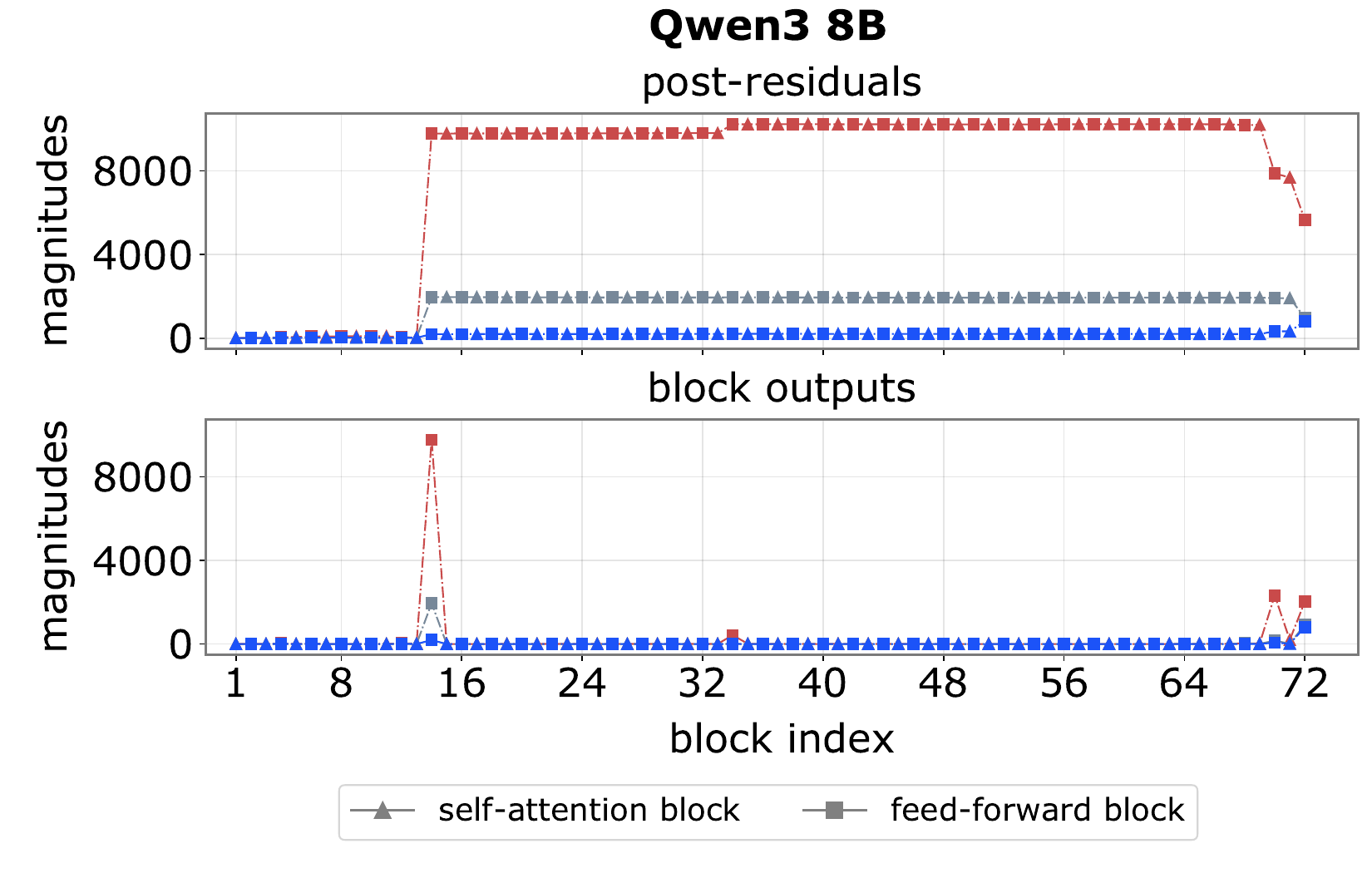}}
    \end{subfigure}
    \vspace{-5pt}
    \caption{\textbf{Top-3 channel magnitudes across depth in Llama 2 7B and Qwen3 8B (post-residuals vs.\ block outputs).} In both models, early blocks inject massive activations that persist through most of the network before being neutralized by late blocks.}
    \label{figure:massive_activations}
  \end{center}
\vspace{-18pt}
\end{figure*}

To characterize how massive activations vary with depth, we track the top-3 channel magnitudes of post-residual hidden representations (\cref{figure:massive_activations}, top panels), following \citet{sun2024massive}.
The magnitudes follow a ``rise--plateau--fall'' trajectory: a sharp increase in early blocks, a long plateau through intermediate blocks, and an abrupt return to typical magnitudes near the end.
This suggests a three-stage life cycle: 
(1) early blocks inject extreme values into the hidden representations;
(2) intermediate blocks propagate these values via the residual connection; and
(3) late blocks neutralize them by inject extreme values with opposite sign.
We describe each stage in turn.

\paragraph{Step-up blocks.}
By examining the individual block outputs (\cref{figure:massive_activations}, bottom panels), we find that massive activations are reliably introduced by \emph{one or two early blocks}, which we term \emph{step-up blocks}.
Prior to these blocks, spike tokens have magnitudes comparable to standard tokens.
The step-up blocks produce extreme values in the spike channels, which are then added to the hidden representation via the residual connection, creating the massive activations.

\paragraph{Residual accumulation.}
In pre-norm Transformers, the hidden representation at depth $i$ can be expressed by unrolling the recurrence in \cref{equation:residual_update}:
\begin{align}
\H_{i+1} = \H_1 + \sum_{j=1}^{i} \F_j(\RMSNorm(\H_j)).
\end{align}
Because the residual stream is additive, extreme values injected by any block $\mathcal{F}_j$ persist through all subsequent blocks unless explicitly counteracted. Empirically, intermediate block contributions to spike channels are typically two to three orders of magnitude smaller than the massive activations themselves. As a result, the massive activations introduced by step-up blocks dominate the residual stream until a later block cancels them.

\paragraph{Step-down blocks.}
As shown in \cref{figure:massive_activations}, massive activations consistently disappear near the end of the network.
Symmetrically to step-up blocks, we identify \emph{one or a few late blocks}, termed \emph{step-down blocks}, whose outputs match the massive activations in magnitude but carry the opposite sign in the corresponding channels.
By contributing an additive inverse via the residual connection, these blocks neutralize the massive activations, returning the hidden representation to a standard range.

\cref{table:step_up_and_step_down} summarizes the step-up and step-down block indices across models.
The consistent positioning of step-up blocks near the beginning and step-down blocks near the end directly accounts for \textbf{Property~\ref{property:1}}: massive activations are confined to intermediate layers because they are injected early and systematically neutralized before the final output.

\begin{table}[t]
  \label{table:step_up_and_step_down}
  \caption{\textbf{Step-up and step-down block indices across models.} Step-up blocks appear near the beginning of the network and step-down blocks near the end, confining massive activations to intermediate layers. Odd and even indices denote attention and feed-forward blocks, respectively.}
  \vspace{-4pt}
  \begin{center}
  \begin{sc}
    \begin{small}
        \begin{tabular}{lccc}
        \toprule
        Model & \# Blocks & Step-Up & Step-Down\\
        \midrule
        Llama 2 7B & 64 & 4 & 62 \\
        Llama 2 13B & 80 & 8 & 78, 79 \\
        Llama 3 8B & 64 & 4 & 64 \\
        \midrule
        Qwen2.5 7B  & 56 & 8, 10 & 54, 55 \\
        Qwen2.5 14B & 96 & 10 & 90, 92, 94, 95 \\
        Qwen3 8B & 72 & 14 & 70, 72 \\
        Qwen3 14B & 80 & 14 & 79\\
        \bottomrule
    \end{tabular}
    \end{small}
  \end{sc}
  \end{center}
  \vspace{-28pt}
\end{table}

\begin{figure}[t]
  \begin{center}
    \centerline{\includegraphics[width=\linewidth]{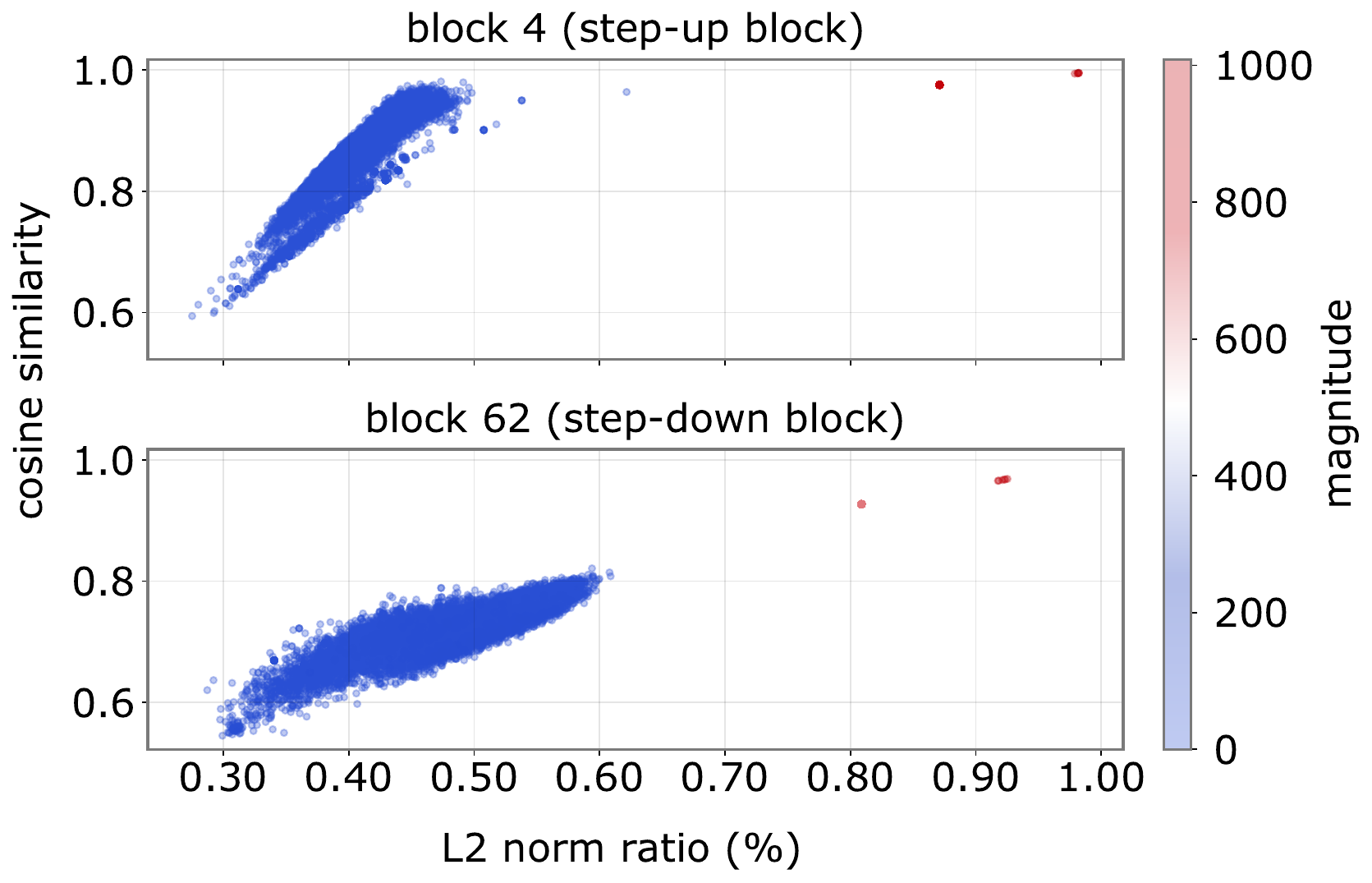}}
    \caption{\textbf{Input-output characteristics of $\SiLU$ in step-up and step-down blocks of Llama 2 7B.} Based on $1024$ randomly sampled sentences from C4 dataset \cite{raffel2020exploring}, we plot the cosine similarity and norm ratio for each token. Points are colored by the maximum magnitude of the block output. For spike tokens (red points), both direction and norm remain largely unchanged, indicating that the $\SiLU$ gate operates in a near-identity regime.}
    \label{figure:cosine_similarities}
  \end{center}
  \vspace{-28pt}
\end{figure}

\subsubsection{Feed-Forward Block as Directional Quadratic Amplifier}

While both attention and feed-forward blocks possess the theoretical capacity to produce large outputs, our analysis reveals that the SwiGLU-based feed-forward block is the primary source of massive activations, functioning as a \emph{directional quadratic amplifier}. We characterize the mechanism by which extreme activations arise for a small subset of tokens in Llama 2 7B.
Other models share the same high-level mechanism, but differ in more precise details that instantiate it. we defer those results to \cref{appendix:additional_empirical_results}.

\paragraph{Near-identity gating regime.}
Let $\tilde{\h}^{(s)} \in \R^{\d{model}}$ denote the normalized input of a spike token to a step-up or step-down feed-forward block.
We empirically observe that the $\SiLU$ nonlinearity operates in a near-identity regime ($\SiLU(\x) \approx \x$), as shown in~\cref{figure:cosine_similarities}.
Under this approximation, the feed-forward transformation reduces to:
\begin{align}
\label{equation:SwiGLU_approximation}
\F_{\text{ffn}}(\tilde{\h}^{(s)}) \!\approx\! \W{down}\!\cdot\!\left( (\W{gate} \tilde{\h}^{(s)})\!\odot\!(\W{up} \tilde{\h}^{(s)}) \right).
\end{align}

\paragraph{High-gain quadratic structure.}
Let $\W{gate}^{(i)}$ and $\W{up}^{(i)}$ denote the $i$-th rows of the respective weight matrices, and let $\W{down}^{(k,i)}$ denote the $(k,i)$-th entry of $\W{down}$. Each output coordinate $k$ then admits the quadratic form (derived in detail in~\cref{theorem:silu_quadratic_form_approx}):
\begin{align}
\F_{\text{ffn}}(\tilde{\h}^{(s)})_k \approx 
\tilde{\h}^{(s)\top} \U_k \tilde{\h}^{(s)} = 
\tilde{\h}^{(s)\top} \S_k \tilde{\h}^{(s)},
\end{align}
where
\begin{align}
\U_k &= \sum_{i=1}^{\d{ffn}} \W{down}^{(k,i)}\, 
        \W{gate}^{(i)} \W{up}^{(i)\top},
        \label{equation:quadratic_form}\\
\S_k &= \tfrac{1}{2}(\mathbf{U}_k + \mathbf{U}_k^\top).
\end{align}

\cref{figure:U_norm} shows the Frobenius norms $\|\U_k\|_F$ across all output coordinates and feed-forward blocks of Llama~2 7B.
Spike channels correspond precisely to coordinates with exceptionally large $\|\U_k\|_F$, and these high-norm coordinates appear exclusively in step-up and step-down blocks.
Inspection of the weight matrices reveals that, for high-gain channels $k$, $\W{down}$ contains anomalously large entries $\W{down}^{(k,i)}$ for certain intermediate dimensions $i$, and the corresponding rows $\W{gate}^{(i)}$ and $\W{up}^{(i)}$ are highly collinear, consistent with prior observations from~\citet{yu2024super}.

\begin{figure}[t]
  \centering
  \vspace{8pt}
  \includegraphics[width=\linewidth,trim={0 0 0 1.0cm},clip]{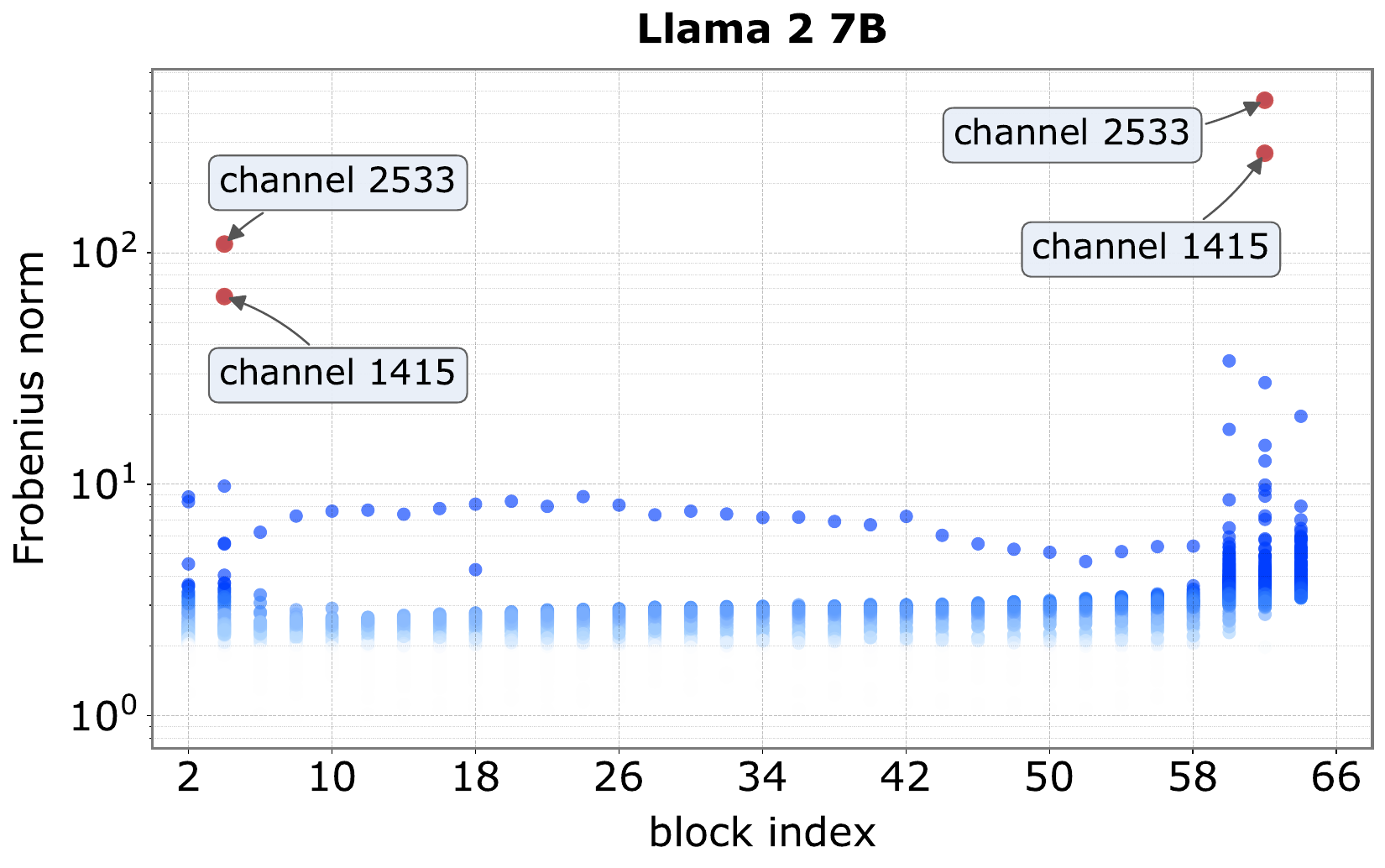}
  \caption{\textbf{Frobenius norms $\|\U_k\|_F$ for the quadratic forms in Llama 2 7B.} Spike channels align with $\U_k$ matrices that have substantially larger norms than typical channels. These high-norm coordinates appear exclusively in step-up and step-down blocks.}
  \label{figure:U_norm}
\vspace{-12pt}
\end{figure}

\paragraph{Rank-one dominance.}
\cref{figure:S_eigenvalues} compares the eigenvalue spectra of $\S_k$ for spike versus non-spike channels. For spike channels, $\S_k$ is dominated by a single eigenvalue $\lambda_\star$ whose magnitude far exceeds the rest of the spectrum. Let $\mathbf{s}_\star$ denote the corresponding unit eigenvector. In such cases, the feed-forward block then acts as a \emph{directional quadratic amplifier} for these channels:
\begin{align}
\F_{\text{ffn}}(\tilde{\h}^{(s)})_k &\approx \tilde{\h}^{(s)\top} \S_k \tilde{\h}^{(s)}\\
&\approx \lambda_\star (\mathbf{s}_\star^\top \tilde{\h}^{(s)})^2 \\
&= \lambda_\star \sqrt{\d{model}} \cos(\mathbf{s}_\star, \tilde{\h}^{(s)}).
\end{align}
When the input $\tilde{\h}^{(s)}$ aligns with the spike direction $\mathbf{s}_\star$, the squared projection is amplified by $\lambda_\star$, producing massive activations. Crucially, inspection of the spike directions across all spike channels reveals that their $\S_k$ matrices share nearly the same principal eigenvector $\mathbf{s}_\star$. Consequently, when an input aligns with this common spike direction, all spike channels are activated simultaneously.

\begin{figure}[t]
  \centering
  \includegraphics[width=0.92\linewidth]{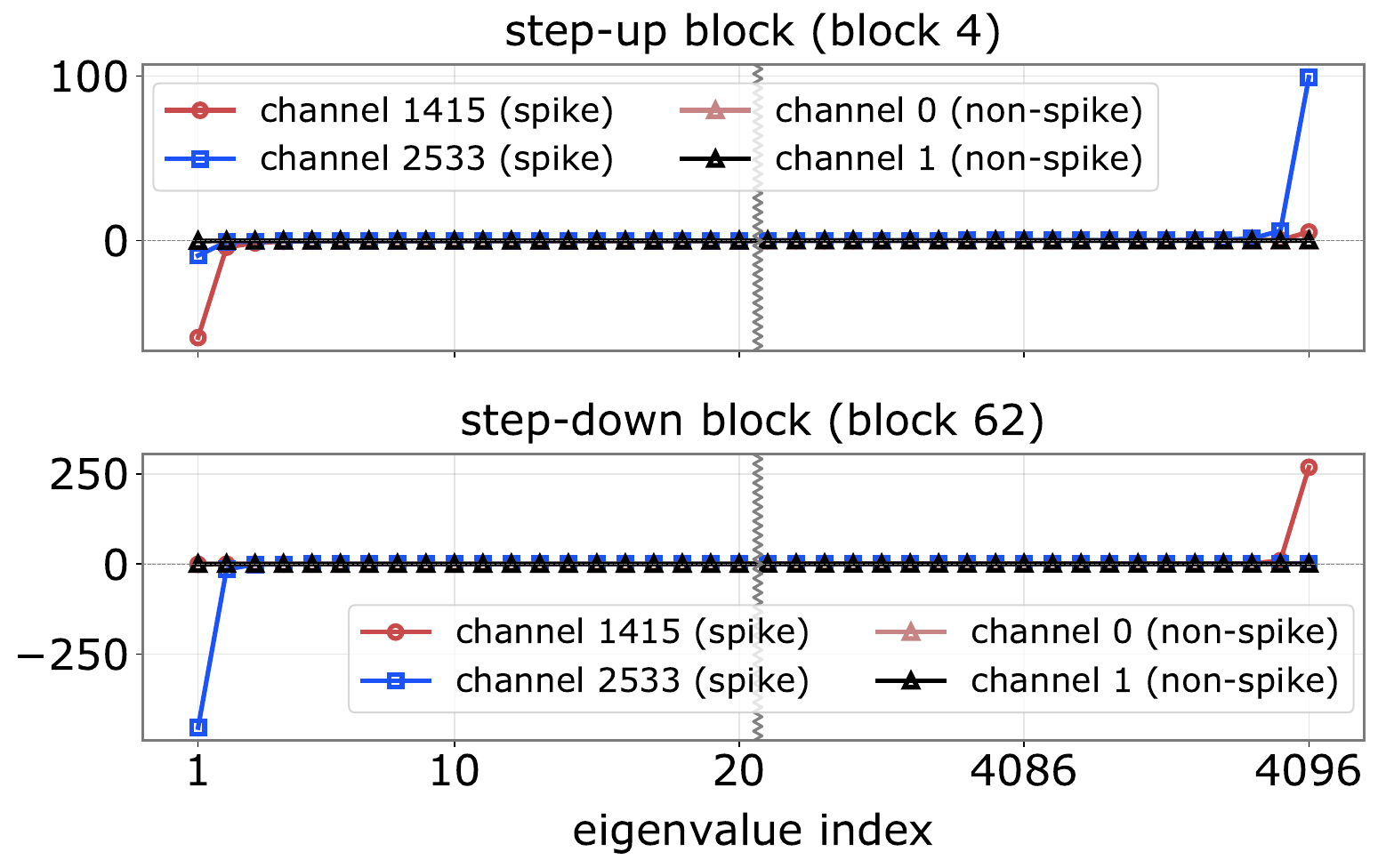}
  \caption{\textbf{Eigenvalue spectra of $\S_k$ for spike vs.\ non-spike channels in Llama 2 7B.} Spike channels exhibit a single dominant eigenvalue $\lambda_\star$ that is orders of magnitude larger than the remainder of the spectrum; non-spike channels show no such outlier.}
  \label{figure:S_eigenvalues}
  \vspace{-18pt}
\end{figure}

This analysis accounts for \textbf{Property~\ref{property:2}}: the scarcity of high-gain quadratic forms explains why massive activations are confined to a small subset of channels.
Furthermore, the existence of a shared spike direction across these channels underpins \textbf{Properties~\ref{property:3},~\ref{property:4} and~\ref{property:5}}; specifically, it accounts for the synchronized triggering of affected channels and their invariant activation magnitude ratios, which are governed by the leading eigenvalues of $\mathbf{S}_k$. 
Finally, because these spike directions are restricted to a highly localized region of the high-dimensional space $\R^{\d{model}}$, extreme activations occur only for tokens whose representations closely align with $\mathbf{s}_\star$. For the vast majority of tokens, the projection onto this direction is negligible.

\subsubsection{What Makes a Token a Spike Token}

While the feed-forward block provides the \emph{capacity} for amplification, it requires the $\tilde{\h}^{(s)}$ to align with the trigger direction $\mathbf{s}_\star$ in order to generate 
massive activations. Prior work \cite{sun2024massive} establishes that spike tokens are almost 
exclusively the first tokens or delimiter tokens; we now study 
why these tokens consistently achieve such alignments.

\paragraph{First tokens.} The initial position serves as the most consistent catalyst for massive activations. A vocabulary-wide probe (\cref{table:first_token}) reveals that over $98\%$ of vocabulary items manifest as spike tokens when placed at position $0$, but rarely do so at subsequent indices. This disparity confirms that the phenomenon is driven by architectural position rather than token semantics. The few exceptions are primarily rare characters from low-resource scripts; we find that their embeddings close to the initialization values, likely due to infrequent gradient updates during pre-training.

\begin{table}[t]
  \caption{\textbf{Ubiquity of initial spikes across diverse LLMs.}
  For nearly all evaluated models, positional occupancy at the initial position induces massive activations in intermediate layers, independent of the token’s semantic identity.}
  \vspace{-8pt}
  \label{table:first_token}
  \begin{center}
  \begin{sc}
    \begin{small}
  \begin{tabular}{lccc}
    \toprule
    Model & \# Vocab & \# Spike Token & Ratio \\
    \midrule
    Llama 2 7B   & 32,000  & 31,887  & 99.65\% \\
    Llama 2 13B  & 32,000  & 31,889  & 99.65\% \\
    Llama 3 8B   & 128,256 & 127,956 & 99.77\% \\
    \midrule
    Qwen 2.5 7B  & 152,064 & 149,587 & 98.40\% \\
    Qwen 2.5 14B & 152,064 & 149,645 & 98.40\% \\
    Qwen 3 8B    & 151,936 & 151,830 & 99.93\% \\
    Qwen 3 14B   & 151,936 & 151,824 & 99.93\% \\
    \bottomrule
  \end{tabular}
  \end{small}
  \end{sc}
  \end{center}
  \vspace{-18pt}
\end{table}

The behavior of the initial position arises because the attention block collapses to a simple linear map. Since the first token only attend to 
itself, its output reduces to:
{\abovedisplayskip=4pt \belowdisplayskip=4pt
\begin{align}
    \F_{\mathrm{attn}}(\h^{(1)}) 
    \!=\! \sum_{i=1}^{\N{head}}  \W{O}^{(i)\top} \W{V}^{(i)\top} \h^{(1)}
    \!\equiv\!  \W{VO}^{\top}\, \h^{(1)},
    \label{equation:attention_output}
\end{align}}where $\h^{(1)} \in \mathbb{R}^{d}$ is the hidden state of the first token, and 
$\W{VO}\coloneqq \sum_{i=1}^{\N{head}} \W{V}^{(i)}\, \W{O}^{(i)}$ is the linear mapping matrix.

In this regime, the attention block applies a \emph{static linear transformation} that is identical across all prompts, consistently steering the first tokens' representations toward the trigger direction $\mathbf{s}_\star$ and thereby inducing the massive activations observed in intermediate layers.

\paragraph{Delimiter Tokens.} Tokens such as periods and newlines follow a mechanistic trajectory similar to first-token sinks. In the early attention blocks, these tokens exhibit significantly elevated post-$\RMSNorm$ magnitudes, stemming from the near-collinearity of their embeddings with the learned scaling parameters of $\RMSNorm$. This magnitude surge induces attention heads to allocate disproportionate weight to the token itself, regardless of the preceding context. So delimiter tokens emulate the isolated environment of the first token across multiple heads. This \emph{self-sinking} behavior allows static linear transformations to project their latent states toward the same high-gain manifold as the first token. Once aligned with $\mathbf{s}_\star$, these representations undergo directional quadratic amplification. 

In summary, a token transitions into a spike token when it demonstrates a strong self-sinking bias in early layers, establishing the stable linear trajectory required to activate the directional quadratic amplifier.
\subsection{The Emergence of Attention Sinks}

Having traced the generation and propagation of massive activations, we now characterize how these spike tokens induce the \emph{attention sink} phenomenon.
Specifically, we demonstrate that normalization transforms spike tokens into sparse, bounded, and nearly constant input vectors, enabling the formation of attention sinks.

\subsubsection{Normalization Transforms Spike Tokens}
In pre-norm Transformer architectures, each attention block operates on normalized hidden representations. 
Let $\h^{(s)}$ denote the hidden representation of a spike token and let $\tilde{\h}^{(s)}$ denote the output of $\RMSNorm\left(\h^{(s)}\right)$. The transformation imparts three properties central to attention sink formation.

\paragraph{Bounded Range.}
Normalization suppresses the extreme magnitudes of spikes, mapping the representation to a bounded range (proof deferred to \cref{theorem:rms_norm_bound}):
\begin{equation}
|\tilde{\h}^{(s)}_i| \leq \sqrt{\d{model}}, \quad \forall\, i \in \{1,\dots,\d{model}\}.
\end{equation}
Hence, even if the pre-norm input contains values on the magnitudes of thousands, the block output $\tilde{\h}^{(s)}$ remains moderate and numerically stable.

\paragraph{Sparsification.}
Because the norm $\|\h^{(s)}\|$ is dominated by a few outlier coordinates, the normalization process effectively suppresses non-spike channels. Consequently, the normalized state $\tilde{\h}^{(s)}$ can be approximated as:
\begin{equation}
\tilde{\h}^{(s)} \approx \sum_{i \in \mathcal{C}} \tilde{\h}^{(s)}_i \mathbf{e}_i,
\end{equation}
where $\mathcal{C}$ denotes the set of spike channel indices and $\mathbf{e}_i$ represents the $i$-th standard basis vector. This transformation yields a sparse, approximately multi-hot representation that is concentrated within a low-dimensional subspace of the original embedding space.

\begin{figure}[t]
    \centering
    \includegraphics[width=\linewidth]{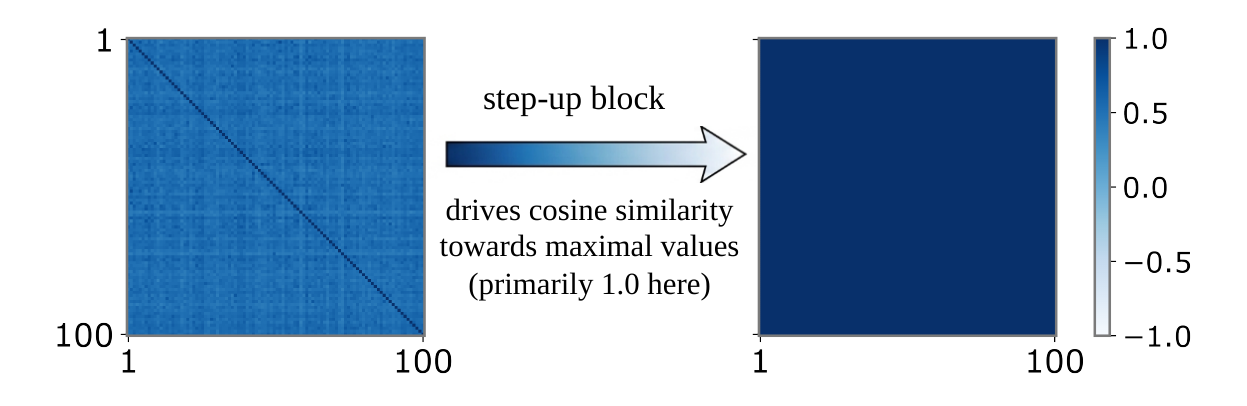}
    \vspace{-5pt}
    \caption{\textbf{Cosine similarity among spike tokens before and after step-up block in Llama 2 7B.} Pre-step-up representations vary across spike tokens, but post-step-up representations collapse to nearly identical directions, empirically validating the near-constant approximation.}
    \label{figure:spike_cos}
    \vspace{-10pt}
\end{figure}

\paragraph{Near-constant vector.}
Spike channels maintain nearly fixed magnitude ratios across spike tokens (\textbf{Property~\ref{property:4}}), so the normalized values $\tilde{\h}^{(s)}_i$ for $i \in \mathcal{C}$ are approximately token-invariant. Consequently, for any spike tokens $a$ and $b$:
\begin{align}
\RMSNorm(\h^{(a)}) \approx \RMSNorm(\h^{(b)}),
\end{align}
even when $\h^{(a)}$ and $\h^{(b)}$ differ substantially in their non-spike channels. Normalization thus collapses distinct representations into a \emph{near-constant} sparse vector, largely erasing token-specific variation. This collapse is empirically demonstrate in~\cref{figure:spike_cos}, where spike tokens following the step-up blocks exhibit cosine similarities approaching $1.0$.

\begin{figure*}[t!]
\centering
\includegraphics[width=\linewidth]{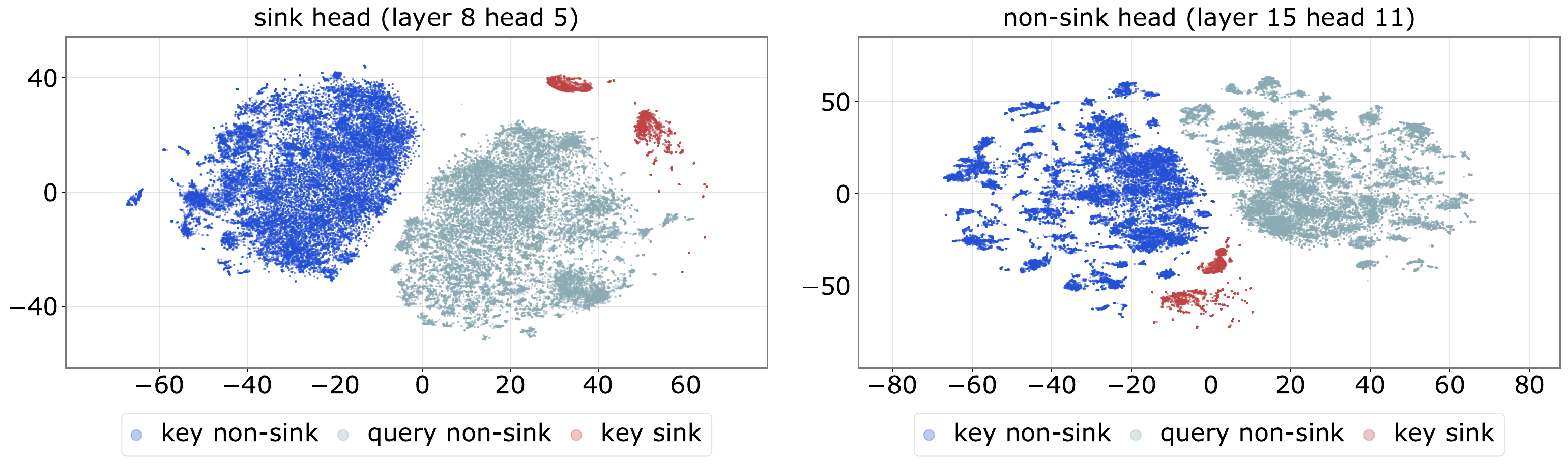}
\caption{\textbf{t-SNE visualization of query and key vectors for a representative sink head (left) and non-sink head (right).} In the sink head, sink keys $\k^{(s)}$ lie closer to $\q^{(n)}$ than non-sink keys $\k^{(n)}$, creating large logit gaps. In the non-sink head, $\k^{(s)}$ and $\k^{(n)}$ are approximately equidistant from $\q^{(n)}$, preventing the formation of a privileged sink position.}
\label{figure:tsne}
\vspace{-10pt}
\end{figure*}

\subsubsection{Geometric Alignment Creates Sinks}

Spike tokens produce sparse normalized representations, which severely restrict the dimensionality of their resulting attention projections. For a given head, the key vector $\mathbf{k}^{(s)}$ of a sink token is given by:
\begin{align}
  \k^{(s)} &=  \W{K}^{\top}\, \tilde{\h}^{(s)}  \approx \sum_{i \in \mathcal{C}} \tilde{\h}^{(s)}_{i}\,  \W{K}^{\top}\, \mathbf{e}_i,
\end{align}
where $\W{K}^\top \mathbf{e}_i$ corresponds to the $i$-th row of the weight matrix. Consequently, the keys $\k^{(s)}$ are confined to the span of only a few rows. 
In practice, we find this subspace typically collapses to only one or two dimensions—a significant reduction compared to the full head dimension $\d{head}$.

Although empirical analysis shows that non-sink queries $\q^{(n)}$ and keys $\k^{(n)}$ also reside in a constrained subspace, their manifold is significantly more expansive than that of the spike tokens. We posit that the emergence of an attention sink is determined not by the absolute volume of these subspaces, but by their \emph{relative geometric alignment}:
\begin{itemize}
    \item \textbf{Sink Heads:} The $\q^{(n)}$ subspace is positioned closer to the fixed $\k^{(s)}$ than to the $\k^{(n)}$ subspace. This alignment produces large, consistent logit gaps in favor of the sink token across diverse inputs.
    \item \textbf{Non-Sink Heads:} The $\q^{(n)}$ subspace is more closely aligned with its non-sink keys $\k^{(n)}$, resulting in attention patterns that distribute mass according to token semantics rather than a fixed default position.
\end{itemize}

As visualized via t-SNE~\citep{maaten2008visualizing} in~\cref{figure:tsne}, the difference between sink and non-sink heads lies in this subspace alignment. In sink heads, the model exploits the near-constant nature of spike keys to create a stable default position for attention mass, effectively offloading excess attention weight to a token whose representation has been neutralized by the normalization function.

Attention sinks arise from two properties of spike tokens after normalization: sparsity and near-constancy. Sparsity restricts sink keys to a low-dimensional subspace (often one or two dimensions) of the row space of $\W{K}$. Near-constancy keeps those keys nearly invariant across prompts. Together, these properties allow the model to reliably separate sink keys from non-sink keys into distinct subspaces, and this separation manifests as the logit gaps characteristic of attention sinks.
\vspace{-6pt}

\subsection{Summary of Findings} 

This section links massive activations and attention sinks through an architecture-driven pathway in pre-norm Transformers.
Massive activations originate from a small number of early \emph{step-up} feed-forward blocks.
In these blocks, SwiGLU behaves as a \emph{directional quadratic amplifier}:
rare high-gain quadratic forms share a common trigger direction, and when a token aligns with it, the token becomes a token carrying massive activations.
Because the residual stream is additive, these outliers persist across intermediate layers.

Normalization then maps spike-token representations to inputs that are sparse and nearly constant.
As a result, diverse spike tokens collapse to a same vector, making their keys low-dimensional and nearly invariant across prompts.
The learned key projection $\W{K}$ consequently maps spike keys and non-spike keys into distinct subspaces.
Attention sinks then emerge in heads whose query subspace aligns more strongly with the fixed sink-key subspace than with the non-sink-key subspace, creating the persistent logit gaps that define attention sinks.
This completes the account of how massive activations and attention sinks co-emerge.
\vspace{-6pt}

\section{Anatomy of Spikes and Sinks}
\label{section:ablations}

The previous section characterize how massive activations and attention sinks co-emerge in pretrained LLMs, suggesting that both phenomena arise from the interactions between architectural components and learned weights.
We now shift from mechanism to causality. Guided by our findings, we perform targeted ablations to identify which architectural and training choices modulate these phenomena, ultimately establishing the causal relationship between the two.

\paragraph{Experimental setup.}
For our baseline setup, we train a Llama-style 7B model~\cite{touvron2023llama,touvron2023llama2,grattafiori2024llama} from scratch on the DCLM dataset~\cite{li2024datacomp} with a budget of 100B tokens. We follow the standard Llama training recipe, supplementing unspecified details with established open-source implementations. Despite training on substantially fewer tokens than the original Llama models, we successfully reproduce the massive activations and attention sinks phenomena described previously. In each ablation, we modify a specific set of settings while keeping the remaining configuration fixed. We evaluate the models using sentences randomly sampled from the C4 dataset~\cite{raffel2020exploring} and report perplexity, sink ratio~\cite{gu2024attention}, and maximal activation magnitudes. Training and evaluation details are provided in \cref{appendix:experimental_settings}.
\begin{table}[t!]
  \caption{\textbf{Optimization Hyperparameters ablations.} We observe that the sink ratio could serve as a proxy for optimization health. Conversely, the magnitude of massive activations varies largely independently of both perplexity and the sink ratio. {\setlength{\fboxsep}{0pt}\colorbox{lime!10}{\strut Highlighted rows denote the baseline result.}}}
  \label{table:training_hyperparameters}
  \begin{center}
    \begin{small}
      \begin{sc}
        \begin{tabular}{lcccr}
          \toprule
            Setup & Perplexity & Sink Ratio & Spike \\
            \midrule
            \multicolumn{4}{c}{Base Learning Rate} \\
            \midrule
            $\num{7.5e-5}$ & 11.8 & 18.6\% & 1447 \\
            $\num{1.5e-4}$ & 10.7 & 31.8\% & 2251  \\
            \rowcolor{lime!10} $\mathbf{\num{3.0e-4}}$ & 10.1 & 46.0\% & 3818 \\
            $\num{6.0e-4}$ & 10.0 & 51.5\% &  3773 \\
            $\num{1.2e-3}$ & 10.2 & 39.2\% &  2723  \\
            \midrule
            \multicolumn{4}{c}{Minimal Learning Rate} \\
            \midrule
            \rowcolor{lime!10} $\mathbf{\num{3e-5}}$ & 10.1 & 46.0\% & 3818 \\
            $\mathbf{\num{3e-4}}$ & 10.7 & 56.8\% & 2870 \\
            \midrule
            \multicolumn{4}{c}{Weight Decay} \\
            \midrule
            $0.0$ & 10.4 & 33.8\% & 12275  \\
            \rowcolor{lime!10} $0.1$ & 10.1 & 46.0\% & 3818 \\
            \midrule
            \multicolumn{4}{c}{AdamW $\beta_2$} \\
            \midrule
            $0.9$ & 10.1 & 49.0\% & 2832 \\
            \rowcolor{lime!10} $0.95$ & 10.1 & 46.0\% & 3818 \\
            $0.999$ & 10.7 & 20.9\% & 1855 \\
            \midrule
            \multicolumn{4}{c}{Training Tokens} \\
            \midrule
            \rowcolor{lime!10} $100$B & 10.1 & 46.0\% & 3818 \\
            $200$B & 9.5 & 63.3\% &  1848\\
          \bottomrule
        \end{tabular}
      \end{sc}
    \end{small}
  \end{center}
  \vspace{-20pt}
\end{table}

\vspace{-15pt}
\subsection{Ablating Optimization Hyperparameters}
Before proceeding to targeted architectural ablations, we examine the sensitivity of these phenomena to common training hyperparameters: learning rate, weight decay, AdamW~\cite{loshchilov2017decoupled} momentum ($\beta_2$), and total training tokens. Results are summarized in \cref{table:training_hyperparameters}.

Two distinct patterns emerge. First, while the sink ratio is not strictly monotonic in perplexity, it remains a robust proxy for \emph{optimization health}. Suboptimal configurations—such as extreme learning rates, disabling weight decay, or mis-specified $\beta_2$—consistently reduce the sink ratio.
Conversely, favorable configurations—such as extending training budget or disabling learning-rate decay—substantially increase the sink ratio.
This suggests that the intensity of attention sinks is tied to the overall optimization health.
Second, the magnitudes of massive activations vary largely independently of perplexity and sink ratio. For instance, disabling weight decay causes activation spikes to exceed $12{,}000$ without any corresponding improvement in sink ratio or perplexity. spikes drive normalized representations into a sparse, near-constant regime, after which further growth in magnitudes contribute diminishingly to attention sinks. Having established that sinks and spikes respond differently to optimization configurations, we now consider architectural interventions that directly target their underlying mechanisms.
\vspace{-5pt}

\subsection{Ablating Massive Activations}

In the previous section, we identify two architectural components that strongly affect the emergence of massive activations: (1) the SwiGLU-based feed-forward network, which generates the massive activations, and (2) the normalization configuration, which governs their propagation and maps spike tokens to sparse, near-constant vectors. In this subsection, we ablate both design choices.

\subsubsection{Feed-Forward Block Design}
Our earlier analysis traced the origin of massive activations to the SwiGLU blocks. To test whether this specific design is a prerequisite for both phenomena, we ablate the feed-forward architecture. Specifically, we evaluate the standard two-layer GeLU-based feed-forward block used in the original Transformer~\cite{vaswani2017attention}, a simplified single linear layer, and an attention-only configuration where all feed-forward blocks are replaced with additional attention layers. Results are summarized in \cref{table:ffn_setup}.
\begin{table}[H]
  \vspace{-10pt}
  \caption{\textbf{Feed-forward block design ablations.} Massive activations and attention sinks emerge across all evaluated designs, including attention-only and single linear configurations. Notably, GeLU and SwiGLU architectures yield significantly higher spike magnitudes by acting as efficient amplifiers. In contrast, linear and attention-only blocks exhibit much lower spikes as they require gradual accumulation across multiple layers. {\setlength{\fboxsep}{0pt}\colorbox{lime!10}{\strut Highlighted rows denote the baseline result.}}}
  \label{table:ffn_setup}
  \begin{center}
    \begin{small}
      \begin{sc}
        \begin{tabular}{lcccr}
          \toprule
            Setup & Perplexity & Sink Ratio & Spike \\
            \midrule
            \multicolumn{4}{c}{Feed-Forward Block} \\
            \midrule
            GeLU & 10.1 & 69.3\% & 3369 \\
            Linear & 12.5 & 58.9\% & 688 \\
            Attention & 10.8 & 73.9\% & 637 \\
            \rowcolor{lime!10} SwiGLU & 10.1 & 46.0\% & 3818 \\
          \bottomrule
        \end{tabular}
      \end{sc}
    \end{small}
  \end{center}
  \vspace{-20pt}
\end{table}
The results indicate that massive activations and attention sinks emerge across all configurations, suggesting that the specific feed-forward block design is not the primary causal driver of either phenomenon. The specific block design is therefore not a prerequisite, but it is a strong modulator of amplification efficiency. SwiGLU and GeLU concentrate outlier growth within a single step, while linear and attention blocks require gradual accumulation across layers.

\subsubsection{Normalization Configuration}

Normalization shapes these phenomena along two axes: how outliers accumulate in the residual stream (governed by the pre-norm configuration), and how spike tokens are transformed into sparse, near-constant vectors (governed by the normalization operator itself). We probe both axes through three variants. First, we test sandwich normalization~\cite{ding2021cogview}, which adds an extra $\text{RMSNorm}$ at the block output, and a variant utilizing QKNorm~\cite{olmo2025olmo}, where input normalization is applied only to queries and keys. Second, we replace standard normalization with an element-wise transformation, DynamicTanh~\cite{zhu2025transformers,chen2025stronger}, which lacks the mathematical capacity to map extreme outliers into sparse, near-constant vectors. \cref{table:normalization} summarizes these results.
\begin{table}[H]
\vspace{-3pt}
  \caption{\textbf{Normalization configuration ablations.} Applying post-block normalization (Sandwich) or element-wise transformations (DynamicTanh) successfully suppresses massive activations. Notably, the model still maintains a significant sink ratio through alternative strategies, demonstrating that sinks can exist independently of massive activations. {\setlength{\fboxsep}{0pt}\colorbox{lime!10}{\strut Highlighted rows denote the baseline result.}}}
  \label{table:normalization}
  \begin{center}
    \begin{small}
      \begin{sc}
        \begin{tabular}{l@{\hspace{-2pt}}cccr}
          \toprule
            Setup & Perplexity & Sink Ratio & Spike \\
            \midrule
            \multicolumn{4}{c}{Normalization} \\
            \midrule
            Sandwich & 9.8 & 44.7\% & 520 \\
            Sandwich (QK)  & 10.0 & 42.0\% & 92 \\
            DynamicTanh & 10.0 & 61.0\% & 153 \\
            \rowcolor{lime!10} Pre-Norm & 10.1 & 46.0\% & 3818 \\
          \bottomrule
        \end{tabular}
      \end{sc}
    \end{small}
  \end{center}
  \vspace{-14pt}
\end{table}

The results demonstrate that normalization serves as a direct lever to decouple sinks from spikes. Sandwich normalization reduces spikes while preserving a sink ratio nearly identical to the baseline.
Because the extra $\text{RMSNorm}$ layer bounds the block output, it prevents the residual stream from accumulating the unbounded values necessary for massive outliers.
Replacing the block-level norm with QKNorm almost entirely eliminates spikes, confirming that these outliers are primarily generated to influence the query and key projections.

Conversely, as observed by \citet{owen2025refined}, element-wise transformations like DynamicTanh also prevent the emergence of massive activations entirely. This aligns with our hypothesis: because DynamicTanh is bounded and operates element-wise rather than via a vector-wide norm, it cannot facilitate the creation of sparse vectors from high-magnitude spikes. Interestingly, the DynamicTanh model yields the highest sink ratio while maintaining low spike magnitudes. Inspection of attention patterns reveals that the model still designates the first token as a stable reference point, achieving this through alternative strategies rather than magnitude-driven normalization. These results confirm that while massive spikes are an artifact of specific normalization configurations, they are not a prerequisite for attention sinks.

\subsection{Ablating Attention Sinks}

Building on our earlier analysis, we find that sink formation depends critically on whether sink and non-sink keys can occupy geometrically separable subspaces. We therefore begin by ablating per-head representational capacity, which determines whether the attention subspace has sufficient room to segregate sink keys from non-sink keys. We then conduct two further ablations motivated by prior work: one on gated attention~\cite{qiu2025gated}, which has been shown to reduce attention sinks and massive activations, and one on context length, motivated by~\cite{xiao2024duoattention}, which argues that attention sinks primarily bias short-range dependence. We ablate all three factors in turn below.

\subsubsection{Attention Head settings}
Our earlier findings identified the segregation of sink and non-sink keys as the primary driver of sink formation. Since this mechanism is inherently tied to per-head capacity, we systematically ablate total head count, head dimension, and head factorization to disentangle their individual contributions. Results are summarized in~\cref{table:attention_setup}.

The results confirm that \emph{head dimension} is the dominant architectural factor governing sink emergence. Increasing $\d{head}$ from $8$ to $128$ produces a monotonic rise in both the sink ratio and spike magnitude, supporting our geometric hypothesis: larger head dimensions expand the attention subspace sufficiently to cleanly separate sink keys from non-sink keys, enabling the generation of a large logit gap.

When total attention capacity ($\d{head} \times \N{head}$) is held fixed, concentrating it into \emph{fewer, larger} heads consistently strengthens sink behavior and improves perplexity, suggesting a potential link between sink ratio and model performance. Conversely, increasing the number of heads at fixed $\d{head}$ yields only marginal gains in the sink ratio, indicating that sink formation saturates once sufficient per-head capacity is available and that distributing capacity across more heads yields diminishing returns.
\begin{table}[H]
  \caption{\textbf{Attention head settings ablations.} Head dimension is the primary architectural driver of sink formation; larger dimensions provide the capacity for the attention subspace to isolate the sink keys. Concentrating capacity into fewer, larger heads intensifies sink behavior. {\setlength{\fboxsep}{0pt}\colorbox{lime!10}{\strut Highlighted rows denote the baseline configuration.}}}
  \label{table:attention_setup}
  \begin{center}
    \begin{small}
      \begin{sc}
        \begin{tabular}{lcccr}
          \toprule
            Setup & Perplexity & Sink Ratio & Spike \\
            \midrule
            \multicolumn{4}{c}{Number of Heads} \\
            \midrule
            $8$  & 10.4 & 37.1\% & 1253  \\
            $16$ & 10.3 & 41.7\% & 1936  \\
            \rowcolor{lime!10} $32$ & 10.1 & 46.0\% & 3818 \\
            \midrule
            \multicolumn{4}{c}{Head Dimension} \\
            \midrule
            $8$   & 11.3 & 4.1\%  & 291  \\
            $16$  & 10.8 & 9.8\%  & 315  \\
            $32$  & 10.5 & 27.9\% & 829  \\
            $64$  & 10.3 & 37.7\% & 2112 \\
            \rowcolor{lime!10} $128$ & 10.1 & 46.0\% & 3818 \\
            \midrule
            \multicolumn{4}{c}{Head Dim / Number of Heads} \\
            \midrule
            $8/512$   & 10.7 & 11.0\% & 1205 \\
            $16/256$  & 10.4 & 30.8\% & 1750 \\
            $32/128$  & 10.3 & 41.1\% & 1916 \\
            $64/64$   & 10.2 & 44.1\% & 2523 \\
            \rowcolor{lime!10} $128/32$ & 10.1 & 46.0\% & 3818 \\
            $256/16$  & 10.1 & 52.1\% & 3429 \\
          \bottomrule
        \end{tabular}
      \end{sc}
    \end{small}
  \end{center}
\end{table}

\subsubsection{Gated Attention}
\label{subsubsecton:gated_attention}

Following \cite{qiu2025gated}, we employ gated attention variants to test the hypothesis that dynamic multiplicative routing can destabilize or prevent attention sink formation. As shown in \cref{table:gated_attention}, gating conditioned on the current hidden representation drastically suppresses the sink ratio and effectively eliminates spikes, with minimal impact on perplexity. By contrast, unconditional gating or gating tied to static signals (such as position or token embedding) preserves strong sink behavior.
\begin{table}[h!]
  \caption{\textbf{Gated attention ablations.} Conditional gating---where the gate is a function of current representation---eliminates the need for attention sinks when applied per channel or per head. This suggests that sinks function as a ``learned gate'' to balance head contributions. Unconditional or static gates (positional or token-based) fail to suppress sinks, as they lack the dynamic, input-dependent routing necessary to substitute for sink behavior.}
  \label{table:gated_attention}
  \centering
  \begin{center}
    \begin{small}
      \begin{sc}
      \begin{tabular}{l@{\hspace{-8pt}}ccc}
          \toprule
            Setup & Perplexity & Sink Ratio & Spike \\
            \midrule
            \multicolumn{4}{c}{Conditional Gating} \\
            \midrule
            Channel & 10.0 & 4.5\% & 202 \\
            Head & 10.1 & 6.4\% & 186 \\
            Single & 10.2 & 31.2\% & 316 \\
            \midrule
            \multicolumn{4}{c}{Unconditional Gating} \\
            \midrule
            Channel & 10.1 & 42.2\% & 1922 \\
            Head & 10.1 & 41.3\% & 1884 \\
            Single & 10.2 & 44.3\% & 1797 \\
            \midrule
            \multicolumn{4}{c}{Conditional on Static Signal} \\
            \midrule
            Positional & 10.1 & 41.1\% & 1755 \\
            Token Embedding & 10.0 & 31.1\% & 1966 \\
          \bottomrule
        \end{tabular}
      \end{sc}
    \end{small}
  \end{center}
\end{table}
Among the conditional gating configurations, per-channel and per-head gates both eliminate attention sinks entirely. A single gate per token, however, yields elevated sink ratios and slightly higher perplexity---consistent with our earlier finding that sink formation is a head-level phenomenon. Under unconditional gating, the static gate fails to suppress either attention sinks or massive activations. Similarly, gating conditioned on positional or token embeddings does not eliminate sinks, as these signals are fixed and cannot adapt to the evolving context.

Taken together, these results suggest that attention sinks serve as a form of implicit input-conditioned gating: the effective routing behavior depends on the prompt history rather than being a fixed property of a particular head, position, or token. When the model has access to a dynamic, representation-conditioned gate, it can modulate attention routing on the fly, eliminating the structural need to maintain a spike token via large residual spikes.

\subsubsection{Training Context Length}

\citet{xiao2024duoattention} suggest that attention sinks facilitate short-range dependence in sink heads. Consistent with this, we observe that sink heads predominantly attend to nearby tokens of the query. We therefore vary the training context-length distribution to test whether sinks are an inductive bias of short-range training, controlling the distribution by adjusting the range of sequence positions over which the training loss is computed. Results are shown in \cref{table:context_length}.

\begin{table}[h!]
  \caption{\textbf{Context-length ablations.} Attention sinks are largely induced to facilitate short-context prediction. When the training distribution is restricted to long sequences, the sink ratio collapses, indicating that sinks are primarily utilized to support short-range dependence. {\setlength{\fboxsep}{0pt}\colorbox{lime!10}{\strut Highlighted rows denote the baseline configuration.}}}
  \label{table:context_length}
  \begin{center}
    \begin{small}
      \begin{sc}
        \begin{tabular}{l@{\hspace{-2pt}}cccr}
          \toprule
            Setup & Perplexity & Sink Ratio & Spike \\
            \midrule
            \multicolumn{4}{c}{Context Length (min/max)} \\
            \midrule
            1/256  & 12.4 & 42.1\% & 5411  \\
            1/1024 & 10.6 & 46.3\% & 4442  \\
            \rowcolor{lime!10} 1/4096 & 10.1 & 46.0\% & 3818 \\
            \midrule
            1024/4096 & 10.1 & 13.0\% & 38470 \\
            1024/5120 & 10.1 & 8.0\%  & 42365 \\
            2048/4096 & 10.6 & 1.2\%  & 7193  \\
            2048/6144 & 10.0 & 5.8\%  & 30634 \\
          \bottomrule
        \end{tabular}
      \end{sc}
    \end{small}
  \end{center}
\end{table}

When the training distribution includes short sequences, the sink ratio remains stable regardless of the maximum context length. Removing short contexts entirely---optimizing only over long-range positions---causes the sink ratio to collapse dramatically. This confirms that attention sinks are fundamentally a byproduct of short-context training: in mixed-length regimes, the first token provides a cheap, universally available global reference that to reduce the influence of far away tokens. Excluding short-context positions from the training loss therefore reveals that the majority of sinks are induced specifically to facilitate local prediction within a global attention mechanism, corroborating the role of sink heads identified by \citet{xiao2024duoattention}.

\subsection{Summary of Findings}

Our ablation study reveals three critical insights into the nature of massive activations and attention sinks:

\begin{enumerate}
    \item \textbf{Causal independence of spikes:} While spikes and sinks often co-occur, they are not inextricably linked. Normalization techniques like Sandwich Norm or QKNorm can eliminate massive activations without destroying the attention sink. This suggests that spikes are an artifact of the Pre-Norm architecture's tendency to accumulate unbounded values, which the model then exploits---but does not strictly require---to create logit contrast.
    \item \textbf{Sinks as a gating mechanism:} The disappearance of sinks in the presence of Conditional Gating suggests that attention sinks are a learned workaround. In the absence of an explicit gate to modulate information flow, the model repurposes the first token as a numerical ``dumping ground'' to effectively gate off unnecessary attention heads.
    \item \textbf{Context-length induction:} Sinks are fundamentally driven by the need to model short-range dependencies using a global attention mechanism. By dumping attention into the first token, the model can effectively ignore long-range context when it is not predictive, a behavior that becomes unnecessary when the model is trained exclusively on long-context sequences.
\end{enumerate}

\subsection{Discussion}

Our ablations paint a coherent picture of how massive activations and attention sinks arise, interact, and can be independently controlled. Across optimization hyperparameters, feed-forward designs, normalization configurations, and attention settings, the two phenomena respond differently to the same interventions — suggesting that their frequent co-occurrence in standard LLMs reflects incidental architectural interactions rather than a deep functional coupling.

\paragraph{Normalization as the bridge between spikes and sinks.} Normalization emerges as the central architectural link between the two phenomena. Standard pre-norm RMSNorm allows unbounded residual values to accumulate and maps spike tokens into sparse, near-constant vectors, which in turn provide a stable substrate for sink formation. Yet this link is incidental rather than necessary: sandwich normalization and DynamicTanh both suppress spikes while leaving a robust sink ratio intact, confirming that sinks can find alternative strategies when magnitude-driven normalization is unavailable. Mechanistically, massive activations interact with normalization to function as implicit parameters.

\paragraph{Attention sinks as a learned routing strategy.} Sink formation is independently driven by two factors: the dimensionality of the per-head attention subspace, which determines whether sink and non-sink keys can be geometrically separated, and the training context-length distribution, which establishes whether attention sinks are useful. Conditional gating experiments reinforce this view — sinks serve as an implicit, input-dependent routing mechanism that biases certain heads toward local, short-range dependencies, and one the model abandons as soon as an explicit dynamic gate is provided.

\paragraph{Independent suppression without performance cost.} Crucially, each phenomenon can be suppressed in isolation without measurable degradation in language modeling performance. This separation has practical implications: architectural choices that eliminate spikes for inference efficiency need not disrupt the short-range routing behavior that sinks provide, and vice versa. Their overlap in standard pretrained LLMs is best understood as a byproduct of the default normalization and training recipe, not a reflection of any underlying functional necessity.
\section{Related Work}
\label{section:related_work}

\noindent
\textbf{Attention sinks} have been observed across various Transformer models of all sizes~\citep{xiao2023efficient, yu2024unveiling}, including LLMs, vision-language models~\citep{zhang2026drives}, and multimodal models~\citep{cappellazzo2025mitigating}. Early evidence of this phenomenon traces back to the identification of outlier dimensions in BERT-scale models~\citep{kovaleva2021bert}, with their emergence linked to the frequency of tokens in training data~\citep{puccetti2022outlier}. They are not considered a downstream-task artifact~\citep{owen2025refined}: they emerge during pre-training and persist through instruction tuning~\citep{gu2024attention}. Proposed explanations span multiple hypotheses~\citep{liu2024intactkv, sandoval2025using, zhangattention, shang2025forgetting}, with many attributing attention sinks to softmax normalization in self-attention~\citep{gu2024attention, guo2024active, miller2023, velivckovic2024softmax, xiao2023efficient, lin2025look}.

Recent work has characterized the functional role of sinks, identifying ``dormant''~\citep{sandoval2025using} or ``garbage''~\citep{sok2026garbage} attention heads dominated by sink tokens that act as redundant dumping grounds. This structure has been exploited for efficient inference through ``streaming'' heads~\citep{xiao2024duoattention}, adaptive KV cache eviction that preserves first tokens~\citep{ge2023model}, layer-condensed cache strategies~\citep{wu2024layer}, and hybrid sparse attention patterns~\citep{fu2025h}. However, the disproportionate attention captured by boundary sinks also leads to low information retrieval for the middle part of long contexts~\citep{liu2023lost}. \medskip\\

\noindent
\textbf{Massive activations} were first identified in LLMs as extreme outlier features concentrated in specific channels~\citep{dettmers2022gpt3} and have been systematically characterized and shown to co-locate with attention sink tokens later~\citep{sun2024massive}. These outliers become increasingly pronounced as models scale~\citep{ahmadian2023intriguing}. They display behavior analogous to implicit bias terms: their magnitudes are stable across inputs, and they are tightly coupled with ``massive weights'' whose perturbation causes performance collapse~\citep{oh2024house}. These activations occupy fixed, largely input-agnostic dimensions and can be induced by highly aligned ``super'' weights~\citep{yu2024super}.

The presence of massive activations poses significant challenges for low-precision serving and training. Outlier channels severely degrade quantization performance~\citep{wei2022outlier, bondarenko2021understanding}, necessitating specialized techniques such as per-token scaling~\citep{yao2022zeroquant}, mixed-precision decomposition~\citep{dettmers2022gpt3, zhao2024atom, huang2024slim}, and outlier migration via shifting or Hadamard transformations~\citep{wei2023outlier, xi2023training, wang2025bitnet}. Furthermore, these activations can amplify numerical rounding errors during inference~\citep{budzinskiy2025numerical} and complicate ultra-low-precision FP4 training~\citep{abecassis2025pretraining}. \medskip\\

\noindent
\textbf{Mitigations and unifying theories} that jointly explain these phenomena have recently emerged. Training-time mitigations primarily modify the attention mechanism or normalization layers. Alternatives to softmax include \textit{sigmoid}~\citep{gu2024attention, ramapuram2024theory}, \textit{ReLU}~\citep{guo2024active}, \textit{softmax-off-by-one}~\citep{kaul2024attention, miller2023}, and Elastic-Softmax~\citep{fu2026attention}. Theoretical analysis of LayerNorm suggests it drives outlier emergence through re-centering and re-scaling~\citep{xu2019understanding}; consequently, normalization-free architectures~\citep{chen2025stronger} and ``Outlier Protected'' blocks~\citep{he2024understanding} have been proposed to eliminate the structural incentive for sinks.

Proactive strategies to suppress outliers include spectral sphere constraints~\citep{xie2026controlled}, smooth architectural modifications~\citep{liang2025tweo}, and explicit weight rescaling during pre-training~\citep{owen2025b}. Alternative routing mechanisms, such as hybrid attention-SSM architectures~\citep{dong2024hymba} or gated networks~\citep{yang2024gated}, provide explicit capacity control that reduces reliance on sinks. These findings complement theories like Mix-Compress-Refine~\citep{queipo2025attention}, which posits that massive activations drive the ``compression valley''~\citep{skean2025layer} and induce spectral dominance. Joint emergence has also been attributed to adaptive optimization dynamics, motivating orthogonalized variants such as OrthoAdam~\citep{kaul2024attention}.
\section{Conclusion}

This research clarifies the mechanistic relationship between massive activations and attention sinks in large language models. The findings demonstrate that these phenomena, while often co-occurring, are not inherently linked but are instead decoupled architectural artifacts of the pre-norm Transformer design. By identifying the specific roles of normalization and residual accumulation, the study shows that massive activations function as global implicit parameters, whereas attention sinks serve as local modulators for attention heads.

Furthermore, the evidence suggests that both phenomena can be independently mitigated through alternative architectural choices—such as modifying the normalization configuration—without sacrificing language-modeling performance. This separation of functional roles provides a clearer path for future model optimization, particularly for improving efficiency in quantization, pruning, and long-context inference. Ultimately, these insights move beyond descriptive observations to offer a structural understanding of how internal LLM representations are shaped by specific training and design decisions.

% In the unusual situation where you want a paper to appear in the
% references without citing it in the main text, use \nocite
% \nocite{langley00}

% \newpage
% \clearpage

\bibliography{main}
\bibliographystyle{icml2026}

%%%%%%%%%%%%%%%%%%%%%%%%%%%%%%%%%%%%%%%%%%%%%%%%%%%%%%%%%%%%%%%%%%%%%%%%%%%%%%%
%%%%%%%%%%%%%%%%%%%%%%%%%%%%%%%%%%%%%%%%%%%%%%%%%%%%%%%%%%%%%%%%%%%%%%%%%%%%%%%
% APPENDIX
%%%%%%%%%%%%%%%%%%%%%%%%%%%%%%%%%%%%%%%%%%%%%%%%%%%%%%%%%%%%%%%%%%%%%%%%%%%%%%%
%%%%%%%%%%%%%%%%%%%%%%%%%%%%%%%%%%%%%%%%%%%%%%%%%%%%%%%%%%%%%%%%%%%%%%%%%%%%%%%
\newpage
\appendix
\onecolumn
\crefalias{section}{appendix}

\section{Experimental Settings}
\label{appendix:experimental_settings}

ll models in \cref{section:ablations} are trained on the DCLM dataset~\citep{li2024datacomp} using a shared codebase and a common baseline recipe. 
Unless otherwise noted, we keep the training pipeline fixed and vary only the factors under study in each ablation.
The default recipe closely follows the Llama-style pretraining setup~\citep{touvron2023llama}.
With some additional details that we couldn't find from the original paper, we refer to the \textit{torchtitan} \cite{liang2025torchtitan} and \textit{Olmo} \cite{olmo2025olmo} codebase.
Tables~\ref{table:model_config} report the baseline architecture and optimization hyperparameters used across experiments.

\begin{table}[h]
\centering
\caption{\textbf{Baseline training configurations.}}
\label{table:model_config}
\small
\begin{tabular}{lc}
\toprule
\textbf{Hyperparameter} & \textbf{Value} \\
\midrule
Layers & 32 \\
Hidden size & 4096 \\
Attention heads & 32 \\
Head dimension & 128 \\
Intermediate size & 11008 \\
Vocabulary size & 32000 \\
\midrule
Optimizer & AdamW \\
$\beta_1, \beta_2$ & 0.9, 0.95 \\
Weight decay & 0.1 \\
Gradient clipping & 1.0 \\
Base learning rate & $\num{3.0e-4}$ \\
LR schedule & Cosine decay \\
Warmup steps & 2{,}000 \\
Training steps & 5{,}000 \\
Batch size & 2M tokens \\
Min LR ratio & 0.1 \\
\bottomrule
\end{tabular}
\end{table}

\noindent\textbf{Datasets.} During evaluation, we randomly sample text from the C4 corpus~\citep{raffel2020exploring}, with a total budget of up to $1024\times4096$ tokens. We tokenize the sampled text and partition it into fixed-length chunks of $\{64, 256, 1024, 2048, 4096\}$ tokens, selecting the chunk size to match the configured context window for each model.

\noindent\textbf{Sink Ratio.} For an input sequence of length $T$, let $A^{l,h}\in[0,1]^{T\times T}$ denote the (causal) self-attention matrix at layer $l$ and head $h$, where $A^{l,h}_{t,k}$ is the attention weight from query position $t$ to key position $k$. Following~\citet{gu2024attention}, we define the \emph{importance score} of position $k$ as the attention it receives on average:
\begin{align}
\alpha^{l,h}_k \;\coloneqq\; \frac{1}{T}\sum_{t=1}^{T} A^{l,h}_{t,k}.
\end{align}
A head is said to exhibit an attention sink (at threshold $\epsilon$) if there exists a position in the first half of the sequence that receives more than $\epsilon$ average attention, i.e.,
$\max\limits_{1\le k \le T/2}\alpha^{l,h}_k > \epsilon$. We then define the \emph{sink ratio} for this sequence as the fraction of heads (averaged over heads across all layers) that satisfy this criterion:
\begin{align}
s_{\epsilon}
\;=\;
\frac{1}{LH}\sum_{l=1}^{L}\sum_{h=1}^{H}
\mathds{1}\!\left(\max_{1\le k \le [T/2]}\alpha^{l,h}_k > \epsilon\right).
\end{align}
Finally, we report the model-level sink ratio by averaging $s_{\epsilon}$ over evaluation sequences. In our experiments, we use $\epsilon=0.3$ and $T=64$ consistently.

\section{Theorems and Derivations}
\label{appendix:mathematical_proof}

\begin{theorem}[\textbf{Attention output as a sum over heads}]
\label{theorem:attn_output}
Let $\O^{(h)}\coloneqq\A^{(h)} \V^{(h)} \in \R^{T \times \d{head}}$ 
denote the output of head $h$ with $\V^{(h)}\!\coloneqq\!\tilde{\H} \W{V}^{(h)}$ and $\tilde{\H}$ being the hidden representations of inputs right before attention block. Let $\W{O} \in \R^{(\N{head} \cdot \d{head}) \times \d{model}}$ be the output projection weight matrix and partition it by head as
    \begin{align}
        \W{O} = 
        \Concat\!\left({\W{O}^{(1)}}, \dots, {\W{O}^{(\N{head})}}\right)^\top
    \end{align}
with $\W{O}^{(h)} \in \R^{\d{head} \times \d{model}}$, $h\in\{1, \dots, \N{head}\}$.

Then the attention block output
    \begin{align}
        \Concat\!\left(\O^{(1)},\dots,\O^{(\N{head})}\right)\cdot\W{O}
    \end{align}
can be written as
    \begin{align}
        \sum_{h=1}^{\N{head}} \A^{(h)} \tilde{\H} \W{V}^{(h)} \W{O}^{(h)}.
    \end{align}
\end{theorem}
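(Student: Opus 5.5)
The plan is to use block matrix multiplication directly. The statement partitions $\W{O}$ into $\N{head}$ row-blocks $\W{O}^{(h)} \in \R^{\d{head}\times\d{model}}$, stacked vertically; the transpose-of-concatenation notation just means
\[
\W{O} = \begin{pmatrix} \W{O}^{(1)} \\ \vdots \\ \W{O}^{(\N{head})} \end{pmatrix}.
\]
The concatenated head output $\Concat(\O^{(1)},\dots,\O^{(\N{head})}) \in \R^{\T\times(\N{head}\d{head})}$ is correspondingly partitioned into column-blocks $\O^{(h)} \in \R^{\T\times \d{head}}$. These two partitions are conformal, since both split the inner dimension $\N{head}\cdot\d{head}$ into consecutive chunks of size $\d{head}$ in the same order.

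Step one is to invoke the block-product rule: a row of column-blocks times a column of row-blocks equals the sum of the blockwise products. This gives
\[
\Concat\!\left(\O^{(1)},\dots,\O^{(\N{head})}\right)\W{O} = \sum_{h=1}^{\N{head}} \O^{(h)}\W{O}^{(h)}.
\]
To be fully explicit, I would check this entrywise. Write the inner index as $j = (h-1)\d{head}+c$ with $c\in\{1,\dots,\d{head}\}$. The $(t,m)$ entry is then $\sum_j [\cdot]_{t,j}[\W{O}]_{j,m}$, and regrouping the sum over $j$ into an outer sum over $h$ and an inner sum over $c$ yields $\sum_h (\O^{(h)}\W{O}^{(h)})_{t,m}$.

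Step two is substitution. Replace $\O^{(h)} = \A^{(h)}\V^{(h)}$, then $\V^{(h)} = \tilde{\H}\W{V}^{(h)}$, and use associativity of matrix multiplication to obtain $\A^{(h)}\tilde{\H}\W{V}^{(h)}\W{O}^{(h)}$ for each term. Summing over $h$ gives the claimed expression.

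There is no real obstacle here. The only point needing care is the bookkeeping of the partition convention, namely confirming that the transpose notation produces vertically stacked row-blocks aligned with the head ordering in $\Concat$. The entrywise index split $j=(h-1)\d{head}+c$ settles this. I would also remark that the causal mask and softmax sit entirely inside $\A^{(h)}$, so they play no role in the linear-algebraic identity.
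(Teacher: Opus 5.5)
Your proposal is correct and follows the same route as the paper's proof: both use the head-wise block partition of $\W{O}$ to split the product into $\sum_{h}\O^{(h)}\W{O}^{(h)}$, then substitute $\O^{(h)}=\A^{(h)}\tilde{\H}\W{V}^{(h)}$. Your entrywise index split $j=(h-1)\d{head}+c$ and your reading of the transpose-of-concatenation as vertical stacking (the only dimensionally consistent interpretation) spell out steps the paper leaves implicit.
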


\begin{proof}
By definition of each head,
    \begin{align}
        \O^{(h)} = \A^{(h)}\V^{(h)} = \A^{(h)}\,\tilde{\H}\W{V}^{(h)}.
    \end{align}
Using the head-wise block partition of $\W{O}$, multiplying the concatenation
by $\W{O}$ decomposes additively:
    \begin{align}
        \Concat(\O^{(1)},\dots,\O^{(\N{head})})\cdot\W{O}
        = \sum_{h=1}^{\N{head}}\O^{(h)}\W{O}^{(h)}.  
    \end{align}
Substituting $\O^{(h)} = \A^{(h)}\tilde{\H}\W{V}^{(h)}$ into the above equation yields
    \begin{align}
        & \Concat\left(\O^{(1)},\dots,\O^{(\N{head})}\right)\W{O}=\sum_{h=1}^{\N{head}} \A^{(h)}\,\tilde{\H}\W{V}^{(h)}\,\W{O}^{(h)}.
    \end{align}
\end{proof}

\begin{theorem}[\textbf{Quadratic-form approximation of a SiLU feed-forward coordinate}]\label{theorem:silu_quadratic_form_approx}
Under the approximation of SiLU function (\cref{equation:SwiGLU_approximation}) for spike tokens with input representations $\tilde{\h}$, the output of the feed-forward block $\y$ has the quadratic form approximation for coordinate $k$ as follows
\begin{align}
\y_k = \tilde \h^\top \U_k \tilde \h,
\end{align}
where
\begin{align}
\U_k \coloneqq \sum_{i=1}^{d_{\mathrm{ff}}} \W{down}^{(k,i)}\; \W{gate}^{(i)} \W{up}^{(i)\top}.
\end{align}
\end{theorem}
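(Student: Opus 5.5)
The plan is to expand the approximate feed-forward map of \cref{equation:SwiGLU_approximation} coordinate by coordinate. Everything here is finite sums and the bilinearity of the outer product, so no analytic estimates are needed.

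\textbf{Step 1: write the intermediate activation coordinatewise.} Fix an intermediate index $i\in\{1,\dots,\d{ffn}\}$. Under the near-identity approximation $\SiLU(x)\approx x$, the $i$-th entry of the gated product is
\begin{align}
\big((\W{gate}\tilde\h)\odot(\W{up}\tilde\h)\big)_i = (\W{gate}^{(i)\top}\tilde\h)(\W{up}^{(i)\top}\tilde\h).
\end{align}
Here $\W{gate}^{(i)}$ and $\W{up}^{(i)}$ denote the $i$-th rows, viewed as column vectors in $\R^{\d{model}}$.

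\textbf{Step 2: rewrite each factor product as a quadratic form.} Since both factors are scalars, one of them can be transposed freely:
\begin{align}
(\W{gate}^{(i)\top}\tilde\h)(\W{up}^{(i)\top}\tilde\h) = (\tilde\h^\top\W{gate}^{(i)})(\W{up}^{(i)\top}\tilde\h) = \tilde\h^\top\big(\W{gate}^{(i)}\W{up}^{(i)\top}\big)\tilde\h.
\end{align}

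\textbf{Step 3: apply the down projection.} The $k$-th output coordinate is the row-$k$ combination
\begin{align}
\y_k = \sum_{i}\W{down}^{(k,i)}\,\tilde\h^\top\W{gate}^{(i)}\W{up}^{(i)\top}\tilde\h.
\end{align}
Because $\tilde\h^\top(\cdot)\tilde\h$ is linear in the matrix argument, the sum can be pulled inside, which gives $\y_k=\tilde\h^\top\U_k\tilde\h$ with $\U_k$ exactly as defined in the statement.

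\textbf{Symmetrization.} As a remark, $\tilde\h^\top\U_k\tilde\h=\tilde\h^\top\U_k^\top\tilde\h$, since a scalar equals its own transpose. Averaging the two expressions gives the symmetric form $\S_k$ used in the main text.

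\textbf{Main obstacle.} The only delicate point is not the algebra but notation. The equality is exact for the surrogate map in which $\SiLU$ is replaced by the identity, and only approximate for the true block. I would therefore state clearly that ``$=$'' holds under that approximation, and be careful that ``rows'' of $\W{gate}$ and $\W{up}$ are treated as column vectors so that $\W{gate}^{(i)}\W{up}^{(i)\top}$ is a $\d{model}\times\d{model}$ rank-one matrix.
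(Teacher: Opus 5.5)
Your proposal is correct and follows the paper's proof essentially step for step: you write each intermediate coordinate $\z_i$ as a product of two scalar projections, recast it as a quadratic form in the rank-one matrix $\W{gate}^{(i)}\W{up}^{(i)\top}$, and use linearity of the down projection to collect these terms into $\U_k$. Your symmetrization remark and your caveat are both correct. The caveat, which the paper leaves implicit, is that the equality is exact only for the surrogate map in which $\SiLU$ is replaced by the identity.
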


\begin{proof}
Let $\z \in \mathbb{R}^{\d{hidden}}$ denote the intermediate hidden state
\begin{equation}
    \z \coloneqq \W{gate} \tilde{\h} \odot \W{up} \tilde{\h}.
\end{equation}
The $i$-th element of $\z$, i.e. $\z_i$ could be represented as
\begin{align}
    \z_i &= ({\W{gate}^{(i)\top}} \tilde{\h}) \cdot(\W{up}^{(i)\top} \tilde{\h})\nonumber \\
    &= \tilde{\h}^\top \W{gate}^{(i)} \W{up}^{(i)\top} \tilde{\h},
\end{align}
where $\W{gate}^{(i)}$ and $\W{up}^{(i)}$ are the $i$-th rows of the weight matrices. The term $\W{gate}^{(i)}{\W{up}^{(i)}}^\top$ is equal to the outer product of two vectors, forming a rank-1 matrix $\V_i$. We can now rewrite the hidden element as the quadratic form
\begin{equation}
    \z_i = \tilde{\h}^\top \V_i \tilde{\h}.
\end{equation}
Next, consider the $k$-th element of the output vector $\y$, denoted $\y_k$. By the linearity of the final projection ($\W{down}$):
\begin{align}
    \y_k &= \sum_{i} \W{down}^{(k,i)} \z_i \nonumber \\
    &= \sum_{i} \W{down}^{(k,i)} \cdot \left(\tilde{\h}^\top \V_i \tilde{\h}\right) \nonumber \\
    &= \tilde{\h}^\top\left( \sum_{i} \W{down}^{(k,i)} \V_i \right)\tilde{\h}.
\end{align}
Therefore, we conclude that the output element is in quadratic form
\begin{equation}
    \y_k = \tilde{\h}^\top \U_k \tilde{\h},
\end{equation}
where the matrix $\U_k$ is defined as the weighted sum of the rank-1 components
\begin{equation}
    \U_k =\sum_{i} \W{down}^{(k,i)} \V_i= \sum_{i} \W{down}^{(k,i)}\cdot\left(\W{gate}^{(i)} \W{up}^{(i)\top}\right).
\end{equation}
This derivation establishes the form given in~\cref{equation:quadratic_form} and concludes the proof.
\end{proof}

\begin{theorem}[\textbf{Coordinate bound under RMS normalization}]
\label{theorem:rms_norm_bound}
Let $\h\in\R^{\d{model}}$ be any non-zero vector and define its RMS-normalized version
\begin{align}
\tilde{\h} \;\coloneqq\; \RMSNorm(\h) \;=\; \sqrt{\d{model}}\,\frac{\h}{\|\h\|_2}.
\end{align}
Then every coordinate of $\tilde{\h}$ is bounded in magnitude by $\sqrt{\d{model}}$:
\begin{align}
\big|\tilde{\h}_i\big| \;\le\; \sqrt{\d{model}}, \quad \forall i\in\{1,\dots,\d{model}\}.
\end{align}
\end{theorem}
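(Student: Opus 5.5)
The plan is to reduce the claim to the elementary fact that a single coordinate of a vector can never exceed its Euclidean norm in magnitude. Since $\h\neq 0$, the quantity $\|\h\|_2>0$, so $\tilde{\h}$ is well defined. By definition, each coordinate is
\begin{align}
\tilde{\h}_i = \sqrt{\d{model}}\,\frac{\h_i}{\|\h\|_2},
\end{align}
so $|\tilde{\h}_i| = \sqrt{\d{model}}\,|\h_i|/\|\h\|_2$. It therefore suffices to show $|\h_i|\le\|\h\|_2$ for every $i$.

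For this inequality I would write $\|\h\|_2^2=\sum_{j=1}^{\d{model}}\h_j^2 \ge \h_i^2$. This holds because every omitted summand $\h_j^2$ with $j\neq i$ is nonnegative. Taking nonnegative square roots gives $|\h_i|\le\|\h\|_2$. An equivalent route is Cauchy--Schwarz applied to $\h_i=\langle \h,\mathbf{e}_i\rangle$ with $\|\mathbf{e}_i\|_2=1$. Dividing by the positive $\|\h\|_2$ and multiplying by $\sqrt{\d{model}}$ yields the stated bound.

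There is no real obstacle here. The only points needing care are excluding $\h=0$, which the hypothesis already does, and noting that the bound is tight. Equality holds exactly when $\h$ is a nonzero multiple of a single basis vector $\mathbf{e}_i$. This is the idealized sparse, one-hot limit discussed in the main text for spike tokens.
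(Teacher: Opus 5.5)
Your proof is correct and follows the paper's argument essentially line for line: you reduce the claim to $|\h_i|\le\|\h\|_2$ via $\|\h\|_2^2=\sum_j \h_j^2\ge \h_i^2$, then divide by $\|\h\|_2>0$ and multiply by $\sqrt{\d{model}}$. Your remarks on the Cauchy--Schwarz alternative and the equality case (a nonzero multiple of a single $\mathbf{e}_i$) are accurate additions that the paper's proof does not include.
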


\begin{proof}
For any coordinate $i\in\{1,\dots,\d{model}\}$, by definition,
\begin{align}
\big|\tilde{\h}_i\big|
= \sqrt{d_{\mathrm{model}}}\,\frac{|\h_i|}{\|\h\|_2}.
\end{align}
Since 
\begin{align}
    \|\h\|_2^2=\sum_{j=1}^{\d{model}}\h_j^2 \ge \h_i^2,
\end{align}
we have $\|\h\|_2 \ge |\h_i|$.
Therefore,
\begin{align}
\frac{|\h_i|}{\|\h\|_2}\le 1
\quad\Longrightarrow\quad
\big|\tilde{\h}_i\big|
\le \sqrt{d_{\mathrm{model}}}.
\end{align}
\end{proof}

\section{Additional Empirical Results}
\label{appendix:additional_empirical_results}

\noindent\textbf{Open-Source Models.} While our primary analysis focused on Llama 2 7B, we validate the universality of our findings across the diverse set of open-source models detailed in~\cref{table:model_list}. These models span multiple families (Llama~2, Llama~3, Qwen2.5, Qwen3), depths (28 to 48 layers), and parameter counts (7B to 14B).
\begin{table}[H]
    \centering
    \vspace{-0.1in}
    \caption{\textbf{List of open source models evaluated in~\cref{appendix:additional_empirical_results}.} We observe consistent massive activations and attention sinks phenomena across the following diverse model families and sizes.}
    \label{table:model_list}
    \small
    \begin{tabular}{cccccc}
        \toprule
        Model Family & Model Name & Layers & Dimensions & Heads & Huggingface Model Id \\ \midrule
        \multirow{3}{*}{\makecell{\textsc{Llama}}} & Llama 2 7B  & 32 & 4096 & 32 & \texttt{meta-llama/Llama-2-7b-hf}  \\
        & Llama 2 13B & 40 & 5120 & 40 & \texttt{meta-llama/Llama-2-13b-hf} \\ 
        & Llama 3 8B & 32 & 4096 & 32 & \texttt{meta-llama/Meta-Llama-3-8B} \\\midrule
        \multirow{4}{*}{\makecell{\textsc{Qwen}}} & Qwen2.5 7B  & 28 & 3584 & 28 & \texttt{Qwen/Qwen2.5-7B}  \\ 
        & Qwen2.5 14B & 48 & 5120 & 40 & \texttt{Qwen/Qwen2.5-14B} \\
        & Qwen3 8B & 36 & 4096 & 32 & \texttt{Qwen/Qwen3-8B} \\
        & Qwen3 14B & 40 & 5120 & 40 & \texttt{Qwen/Qwen3-14B} \\
        \bottomrule
    \end{tabular}
\vspace{-0.2in}
\end{table}

\paragraph{Universality of Step-Up/Step-Down Dynamics.}
\cref{figure/appendix:magnitudes} visualizes the top-3 coordinate magnitudes through the residual stream for all 12 evaluated models. We observe two consistent behaviors across all architectures:
\begin{enumerate}
    \item \textbf{Massive Activations:} Every model exhibits activation spikes orders of magnitude larger than the baseline variance.
    \item \textbf{Feed Forward Block Driven Origin:} Comparing the ``post-residuals'' (top panels) and ``block outputs'' (bottom panels) plots in~\cref{figure/appendix:magnitudes} confirms that these spikes are not merely accumulated residual error. Instead, they originate abruptly at specific ``step-up'' blocks and are neutralized by subsequent ``step-down'' blocks, matching the mechanism described in \cref{section:from_spikes_to_sinks}.
\end{enumerate}
\vspace{-5pt}
\begin{figure}[H]
  \begin{center}
    \begin{subfigure}[h]{.49\textwidth}
    \centerline{\includegraphics[height=0.48\linewidth,trim={0 2.0cm 0 0},clip]{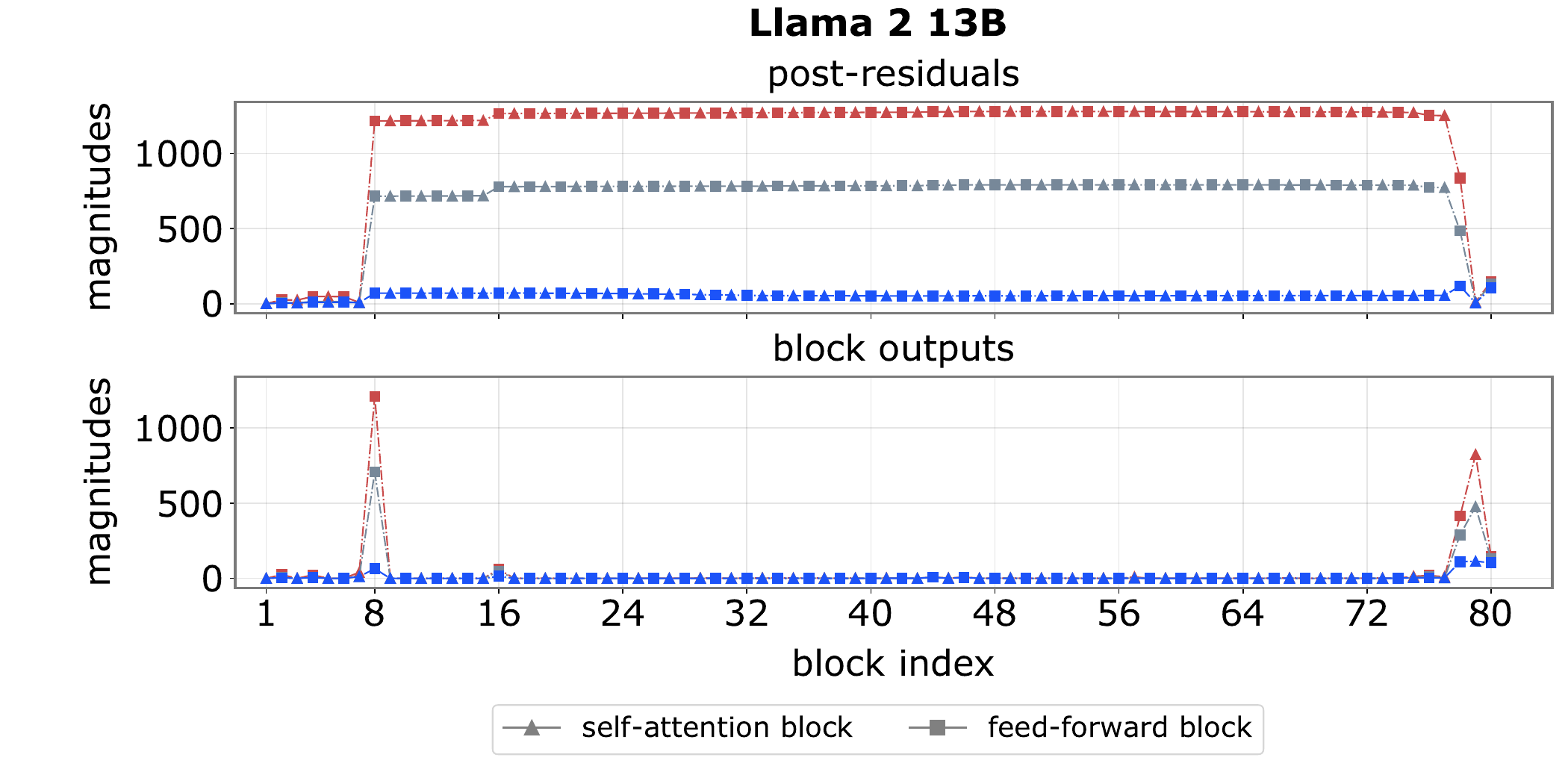}}
    \end{subfigure}
    \hfill
    \begin{subfigure}[h]{.49\textwidth}
    \centerline{\includegraphics[height=0.48\linewidth,trim={3.2cm 2.0cm 0 0},clip]{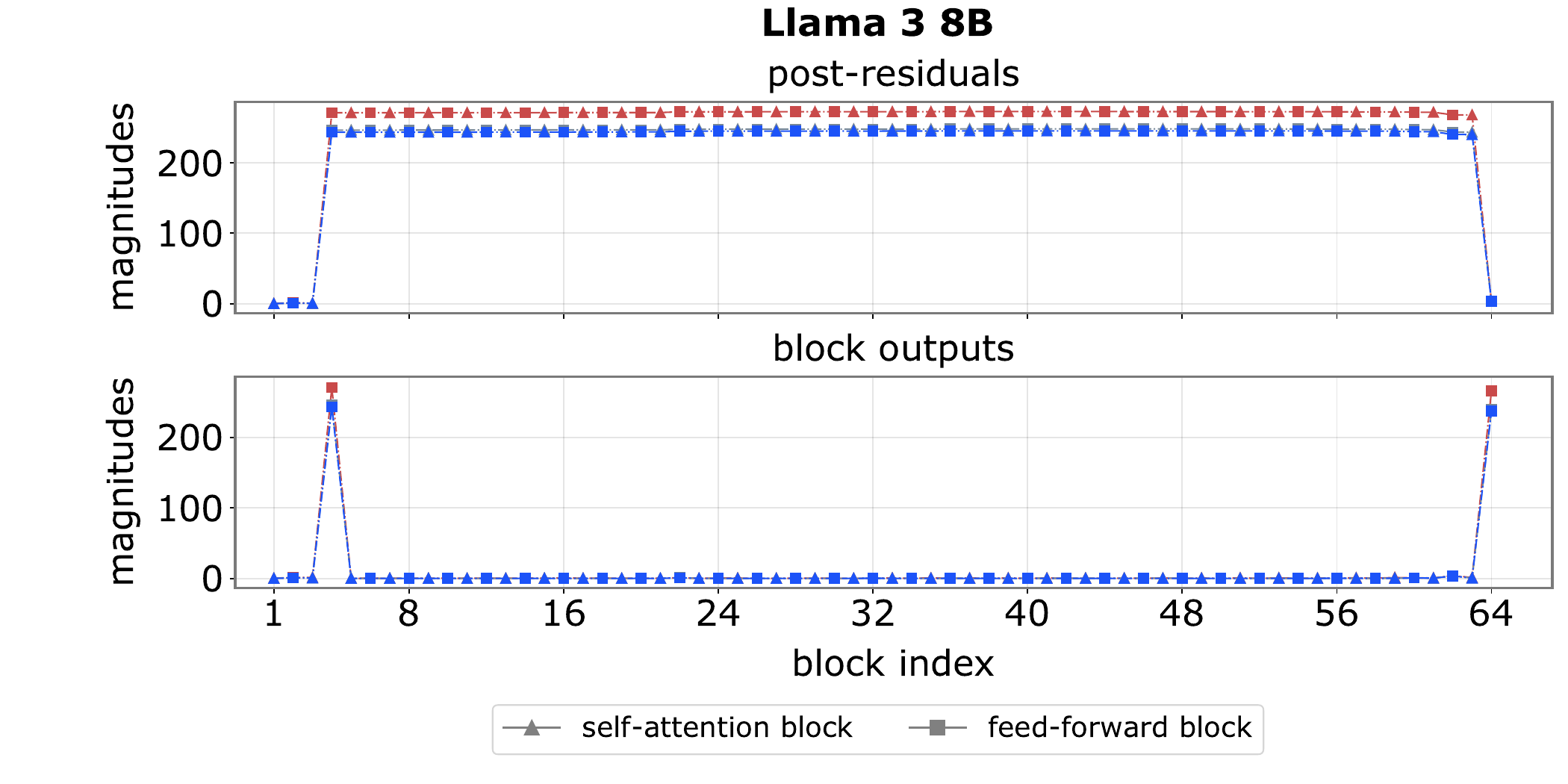}}
    \end{subfigure}
    
    \begin{subfigure}[h]{.49\textwidth}
    \centerline{\includegraphics[height=0.48\linewidth,trim={0 2.0cm 0 0},clip]{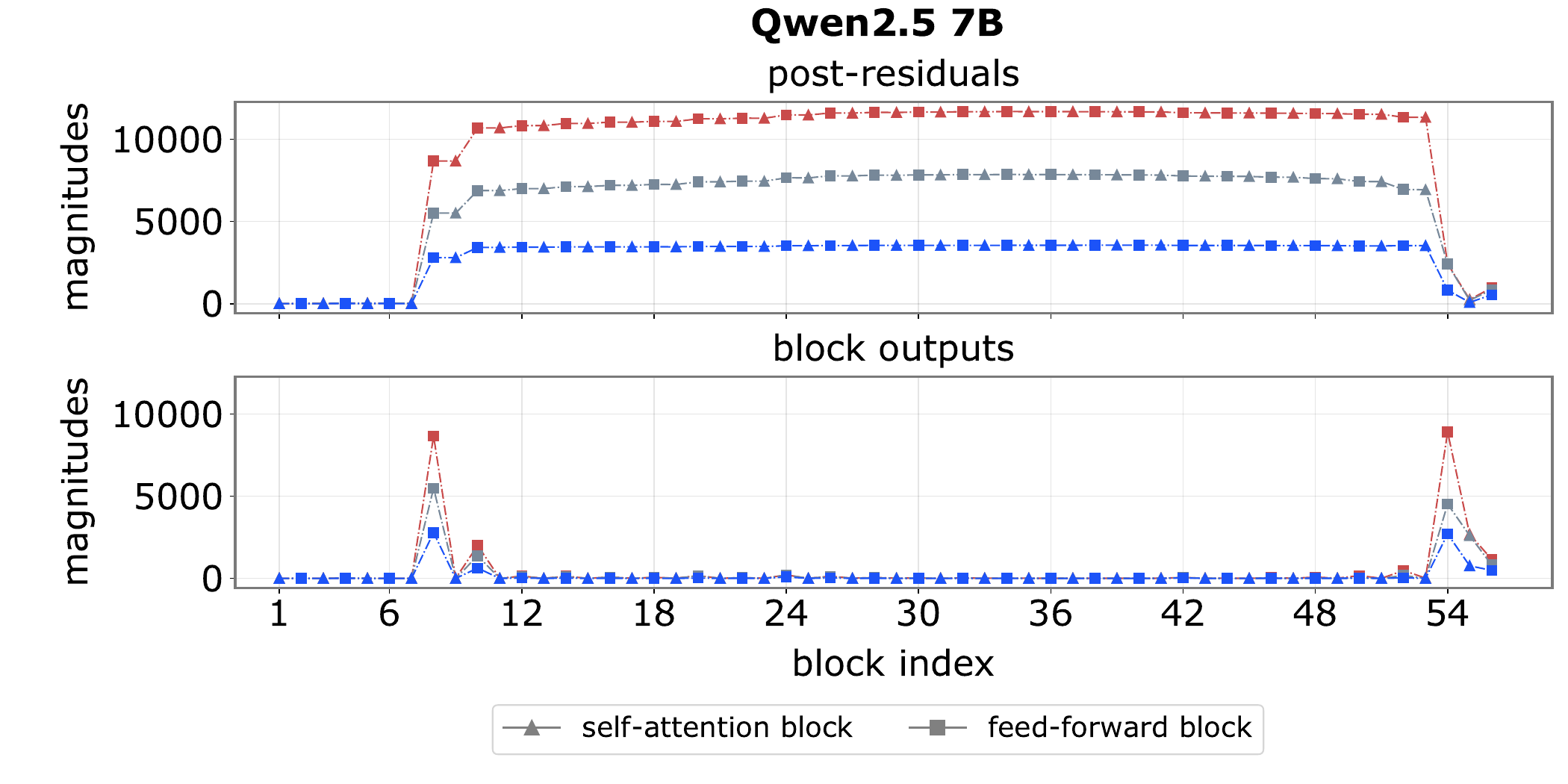}}
    \end{subfigure}
    \hfill
    \begin{subfigure}[h]{.49\textwidth}
    \centerline{\includegraphics[height=0.48\linewidth,trim={2.2cm 2.0cm 0 0},clip]{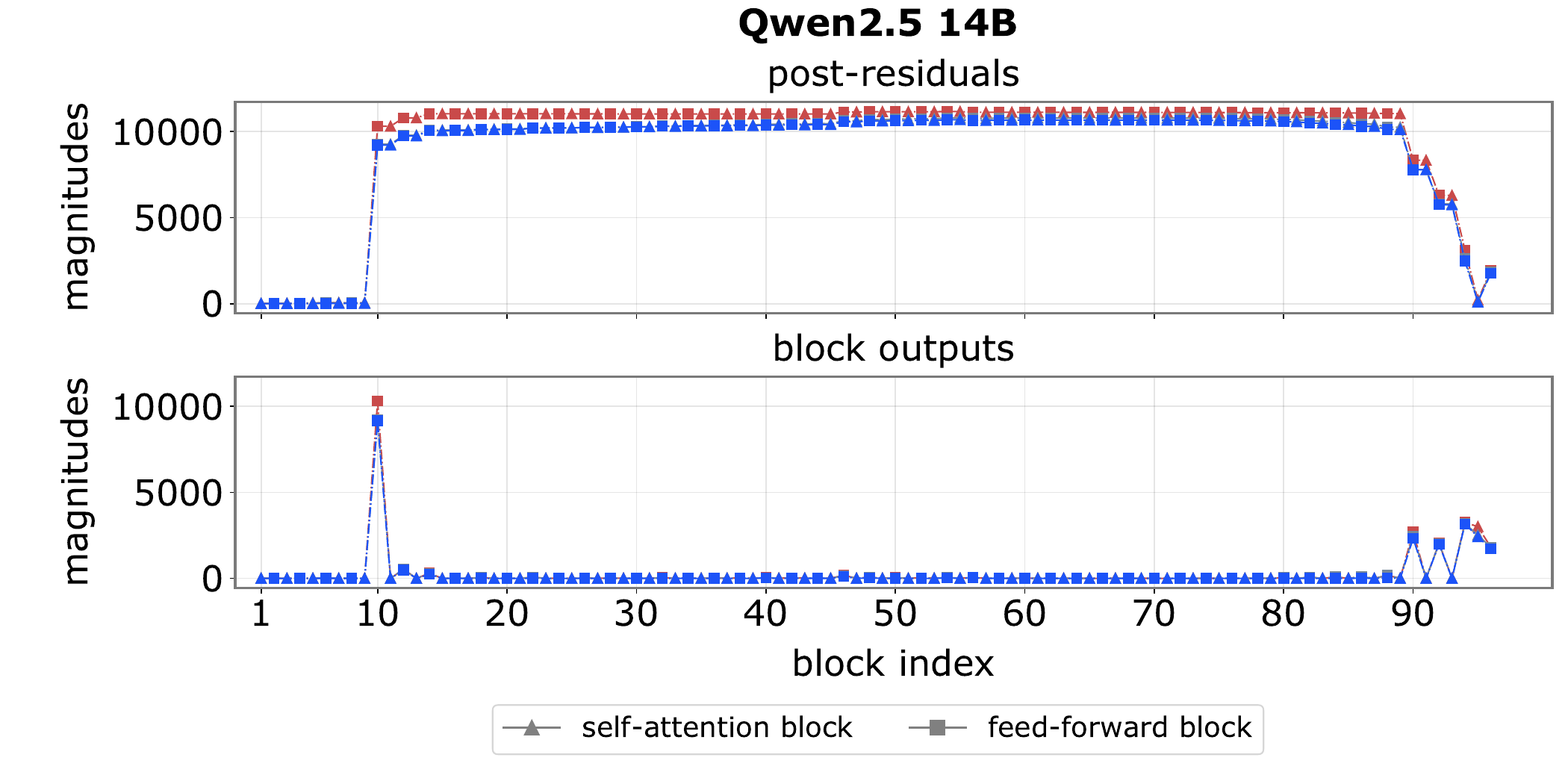}}
    \end{subfigure}
    
    \begin{subfigure}[h]{.49\textwidth}
    \centerline{\includegraphics[height=0.55\linewidth]{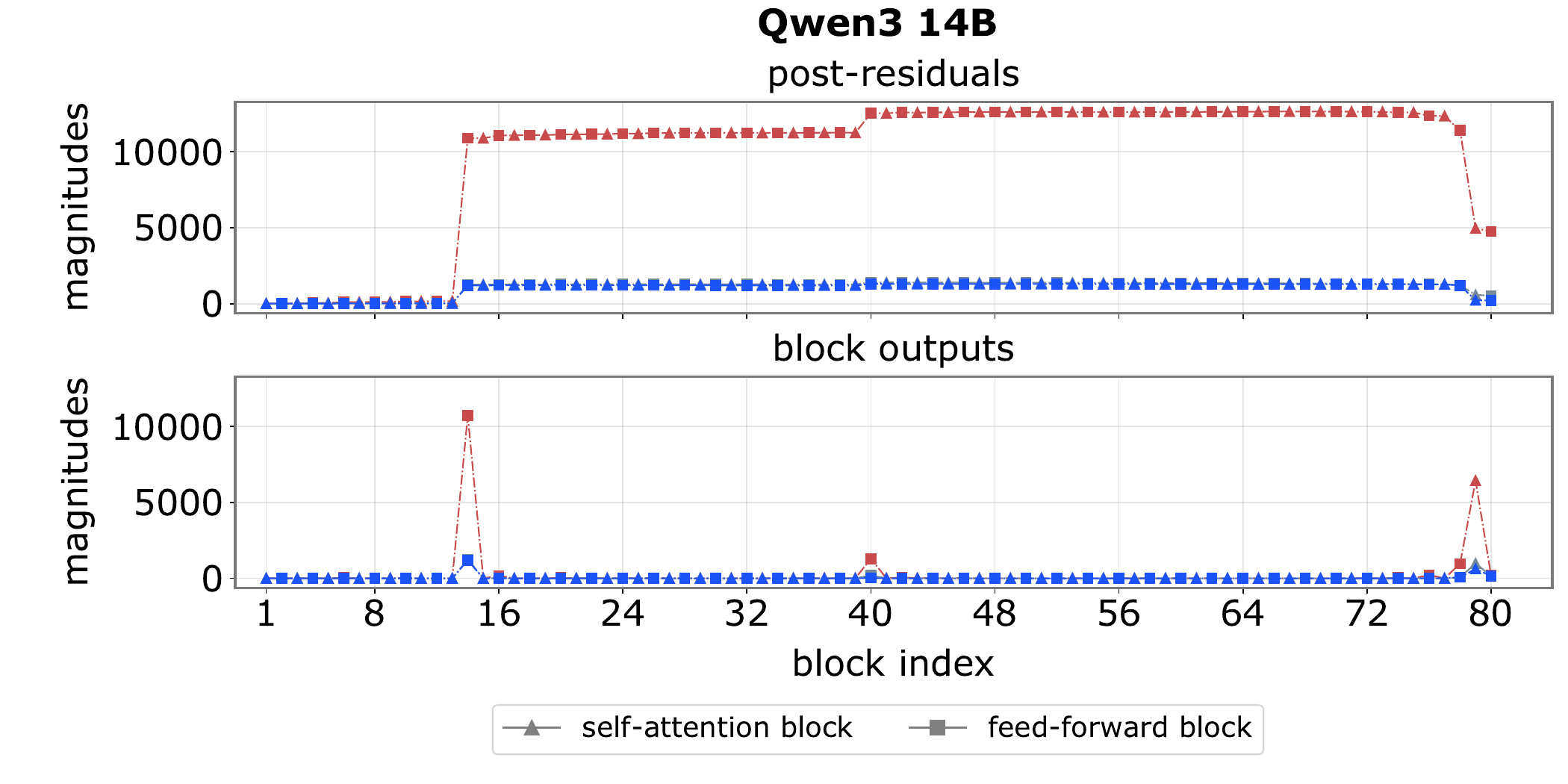}}
    \end{subfigure}
    \hfill\null
    \vspace{-5pt}
    \caption{\textbf{Top-3 coordinate magnitudes after (top panels) and before (bottom panels) the residual branch for 12 open source models.} All models exhibit the characteristic step-up and step-down behavior driven by one or few early blocks and then the late blocks.}
    \label{figure/appendix:magnitudes}
  \end{center}
\vspace{-18pt}
\end{figure}

\paragraph{Universality of Frobenius norm outliers.}
To confirm that these activations arise from the directional quadratic amplification mechanism, we analyze the Frobenius norms of the quadratic form matrices $\U_k$. As shown in~\cref{figure/appendix:frobenius_norms}, channels exhibiting massive activations correspond to $\U_k$ matrices with exceptionally large Frobenius norms.

For example, in Llama 3 8B, distinct spikes are visible at channels 788, 1384, and 4062, which align with the massive activation spikes observed in~\cref{figure/appendix:magnitudes}. This confirms that the alignment of attention sinks with high-gain quadratic directions in the MLP is a structural invariant across Llama model families.

\begin{figure}[H]
  \begin{center}
    \begin{subfigure}[b]{.49\textwidth}
    \centerline{\includegraphics[height=0.6\linewidth]{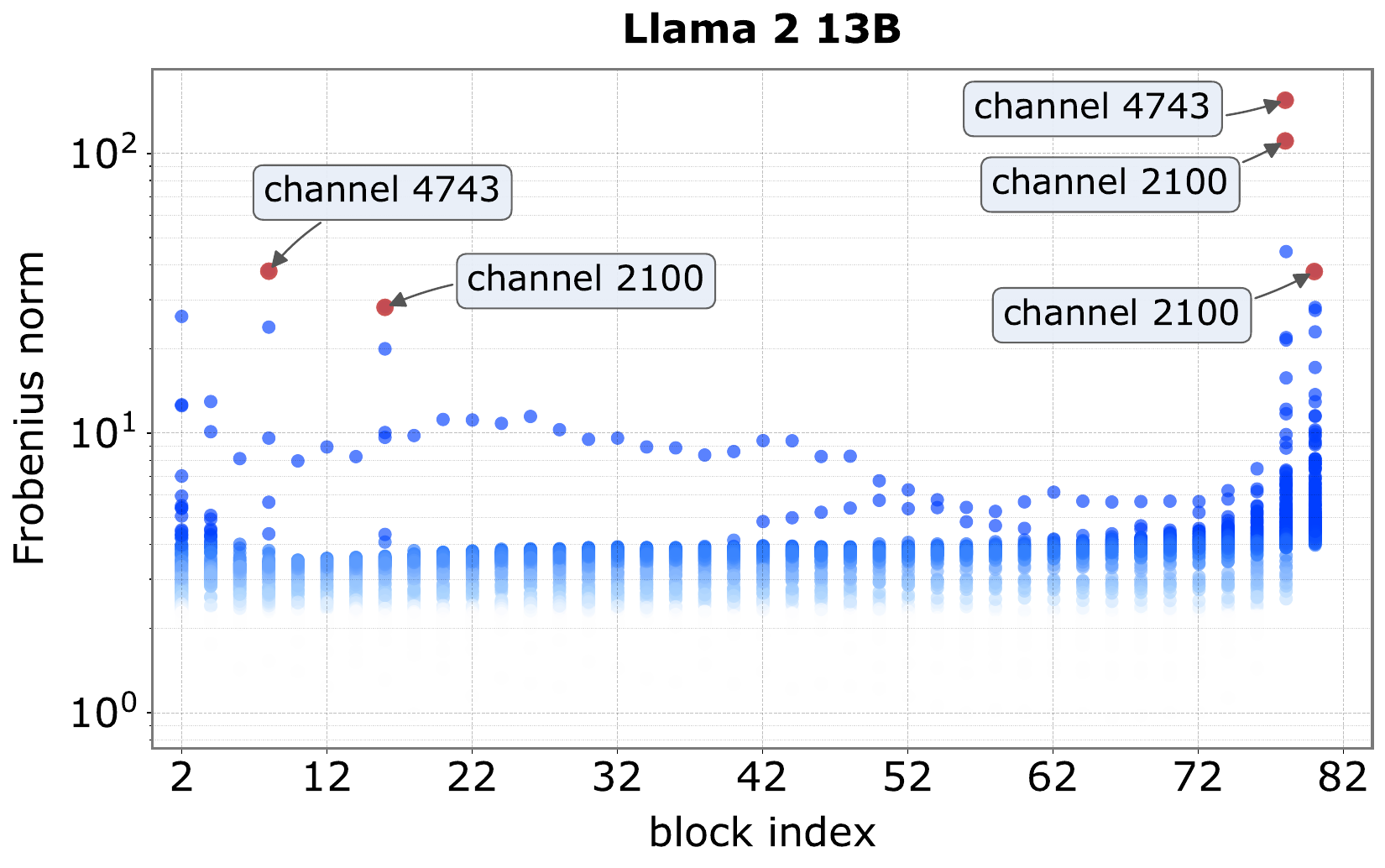}}
    \end{subfigure}
    \hfill
    \begin{subfigure}[b]{.49\textwidth}
    \centerline{\includegraphics[height=0.6\linewidth,trim={1.2cm 0 0 0},clip]{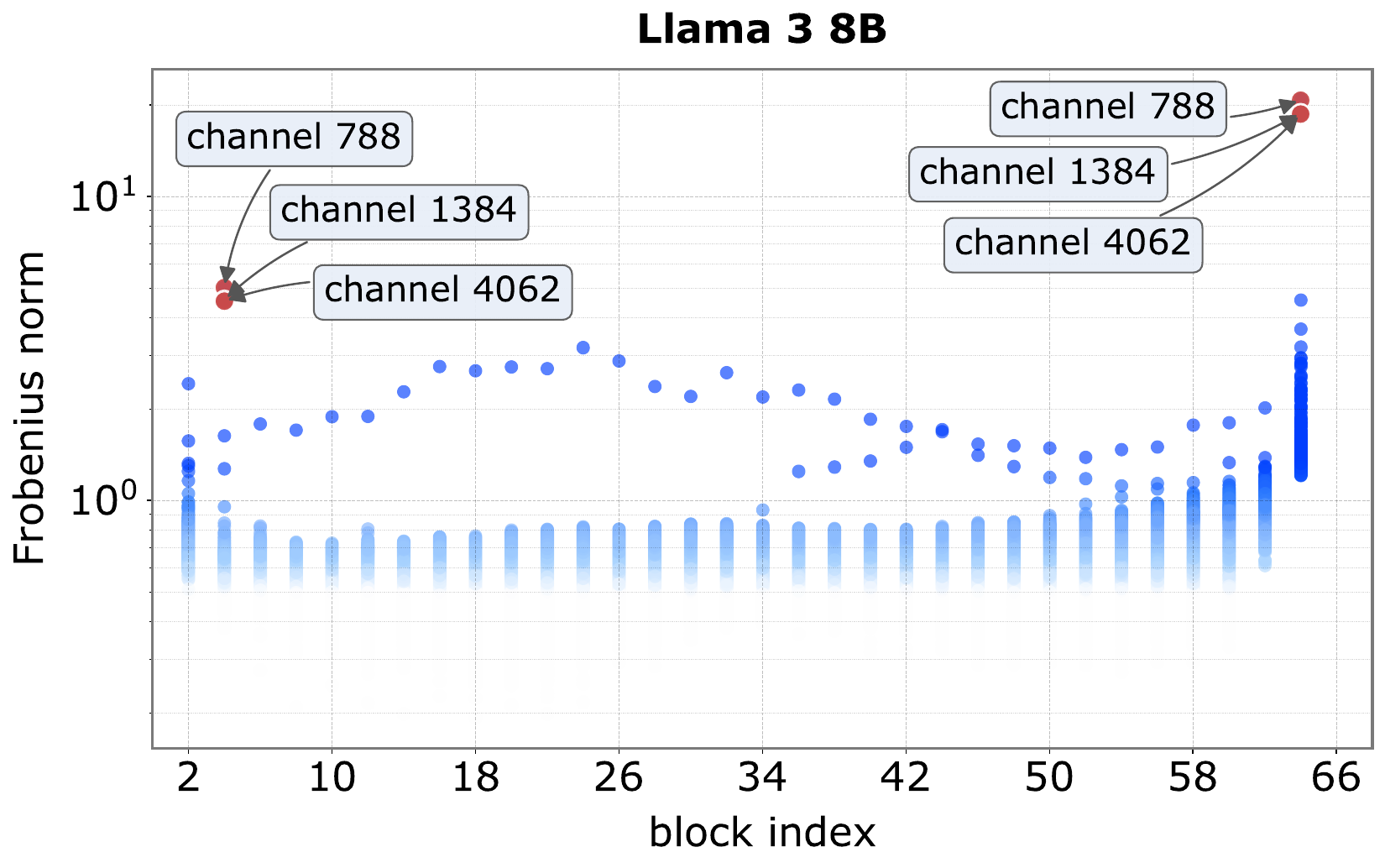}}
    \end{subfigure}
    \caption{\textbf{Frobenius norms $\|\U_k\|_F$ for the quadratic forms of Llama models.} Spike channels align with $\U_k$ matrices that exhibit substantially larger norms than typical channels. These high-norm coordinates appear exclusively in step-up and step-down blocks.}
    \label{figure/appendix:frobenius_norms}
  \end{center}
\vspace{-18pt}
\end{figure}

%%%%%%%%%%%%%%%%%%%%%%%%%%%%%%%%%%%%%%%%%%%%%%%%%%%%%%%%%%%%%%%%%%%%%%%%%%%%%%%
%%%%%%%%%%%%%%%%%%%%%%%%%%%%%%%%%%%%%%%%%%%%%%%%%%%%%%%%%%%%%%%%%%%%%%%%%%%%%%%

\end{document}